\definecolor{mydarkblue}{rgb}{0,0.08,0.45}
\newtheorem{theorem}{Theorem}
\newtheorem{lemma}{Lemma}
\newtheorem{definition}{Definition}
\theoremstyle{definition}
\newtheorem{remark}{Remark}
\newcommand{\R}{\mathbb R}
\let\eps\varepsilon
\let\op\operatornamewithlimits
\newcommand{\transpose}{T}
\DeclareMathOperator*{\E}{\mathbb E}
\DeclareMathOperator*{\sgn}{sgn}
\title{A Spectral View of Adversarially Robust Features}
\author{
  Shivam Garg \\
    \And
  Vatsal Sharan\thanks{Equal contribution}\\
  \And
  Brian Hu Zhang\footnotemark[1]\\
  \And
  Gregory Valiant
\AND
  Stanford University\\
  Stanford, CA 94305 \\
  \texttt{\{shivamgarg, vsharan, bhz, gvaliant\}@stanford.edu }
}
\begin{document}

\maketitle

\begin{abstract}
Given the apparent difficulty of learning models that are robust to adversarial perturbations, we propose tackling the simpler problem of developing \emph{adversarially robust features}.  Specifically, given a dataset and metric of interest, the goal is to return a function (or multiple functions) that 1) is robust to adversarial perturbations, and 2) has significant variation across the datapoints.  We establish strong connections between adversarially robust features and a natural spectral property of the geometry of the dataset and metric of interest.  This connection can be leveraged to provide both robust features, and a lower bound on the robustness of \emph{any} function that has significant variance across the dataset.  Finally, we provide empirical evidence that the adversarially robust features given by this spectral approach can be fruitfully leveraged to \emph{learn} a robust (and accurate) model.
\end{abstract}
\section{Introduction}
While machine learning models have achieved spectacular performance in many settings, including human-level accuracy for a variety of image recognition tasks, these models exhibit a striking vulnerability to \emph{adversarial examples}.  For nearly every input datapoint---including training data---a small perturbation can be carefully chosen to make the model  misclassify this perturbed point.  Often, these perturbations are so minute that they are not discernible to the human eye.  

Since the initial work of \cite{szegedy2013intriguing} and \cite{goodfellow2014explaining} identified this surprising brittleness of many models trained over high-dimensional data, there has been a growing appreciation for the importance of understanding this vulnerability.  From a conceptual standpoint, this lack of robustness seems to be one of the most significant differences between humans' classification abilities (particularly for image recognition tasks), and computer models.  Indeed this vulnerability is touted as evidence that computer models are not \emph{really} learning, and are simply assembling a number of cheap and effective, but easily fooled, tricks.  Fueled by a recent line of work demonstrating that adversarial examples can actually be created in the real world (as opposed to requiring the ability to edit the individual pixels in an input image) \citep{evtimov2017robust,brown2017adversarial,kurakin2016adversarial, athalye2017synthesizing}, there has been a significant effort to examine adversarial examples from a security perspective.  In certain settings where trained machine learning systems make critically important decisions, developing models that are robust to adversarial examples might be a requisite for deployment.


Despite the intense recent interest in both computing adversarial examples and on developing learning algorithms that yield robust models, we seem to have more questions than answers.  In general, ensuring that models trained on high-dimensional data are robust to adversarial examples seems to be extremely difficult: for example, \citet{athalye2018obfuscated} claims to have broken six attempted defenses submitted to ICLR 2018 before the conference even happened.  Additionally, we currently lack answers to many of the most basic questions concerning why adversarial examples are so difficult to avoid.   What are the tradeoffs between the amount of data available, accuracy of the trained model, and vulnerability to adversarial examples?  What properties of the geometry of a dataset determine whether a robust and accurate model exists? 

The goal of this work is to provide a new perspective on robustness to adversarial examples, by investigating the simpler objective of finding adversarially robust \emph{features}.  Rather than trying to learn a robust function that also achieves high classification accuracy, we consider the problem of learning \emph{any} function that is robust to adversarial perturbations with respect to \emph{any} specified metric.  Specifically, given a dataset of $d$-dimensional points and a metric of interest, can we learn \emph{features}---namely functions from $\R^d\rightarrow \R$---which 1) are robust to adversarial perturbations of a bounded magnitude with respect to the specified metric, and 2) have significant variation across the datapoints (which precludes the trivially robust constant function).

There are several motivations for considering this problem of finding robust features: First, given the apparent difficulty of learning adversarially robust models, this is a natural first step that might help disentangle the confounding challenges of achieving robustness and achieving good classification performance.  Second, given robust features, one can hope to get a robust model if the classifier used on top of these features is reasonably Lipschitz.  While there are no \emph{a priori} guarantees that the features contain any information about the labels, as we empirically demonstrate, these features seem to contain sufficient information about the geometry of the dataset to yield accurate models.  In this sense, computing robust features can be viewed as a possible intermediate step in learning robust models, which might also significantly reduce the computational expense of training robust models directly.  Finally, considering this simpler question of understanding robust features might yield important insights into the geometry of  datasets, and the specific metrics under which the robustness is being considered (e.g. the geometry of the data under the $\ell_{\infty}$, or $\ell_2$, metric.)  For example, by providing a lower bound on the robustness of any function (that has variance one across the datapoints), we trivially obtain a lower bound on the robustness of any classification model.  

\subsection{Robustness to Adversarial Perturbations}
Before proceeding, it will be useful to formalize the notion of robustness (or lack thereof) to adversarial examples.  The following definition provides one natural such definition, and is given in terms of a distribution $D$ from which examples are drawn, and a specific metric, $dist(\cdot,\cdot)$ in terms of which the magnitude of perturbations will be measured.  

\begin{definition}\label{def:rob_dist}
A function $f: \R^d \rightarrow \R$ is said to be $(\eps,\delta,\gamma)$ \emph{robust to adversarial perturbations} for a distribution $D$ over $\R^d$ with respect to a distance metric $dist: \R^d \times \R^d \rightarrow \R$ if, for a point $x$ drawn according to $D$, the probability that there exists $x'$ such that $dist(x,x')\le\eps$ and $|f(x)-f(x')| \ge \delta$, is bounded by $\gamma$.  Formally, \[\Pr_{x \sim D}[\exists x' \text{ s.t. } dist(x,x')\le\eps \text{ and } |f(x)-f(x')| \ge \delta] \le \gamma.\]
\end{definition}

In the case that the function $f$ is a binary classifier, if $f$ is $(\eps, 1, \gamma)$ robust with respect to the distribution $D$ of examples and a distance metric $d$, then even if adversarial perturbations of magnitude $\eps$ are allowed, the classification accuracy of $f$ can suffer by at most $\gamma$.  


Our approach will be easier to describe, and more intuitive, when viewed as a method for assigning feature values to an entire dataset.  
Here the goal is to map each datapoint to a feature value (or set of values), which is robust to perturbations of the points in the dataset.  Given a dataset $X$ consisting of $n$ points in $\R^d$, we desire a function $F$ that takes as input $X$, and outputs a vector $F(X) \in \R^n$; such a function $F$ is robust for a dataset $X$ if, for all $X'$ obtained by perturbing points in $X$, $F(X)$ and $F(X')$ are close. 

Formally, let $\mathcal{X}$ be the set of all datasets consisting of $n$ points in $\R^d$, and $\norm{\cdot}$ denote the $\ell_2$ norm.  For notational convenience, we will use $F_X$ and $F(X)$ interchangeably, and use $F_X(x)$ to denote the feature value $F$ associates with a point $x \in X$. We overload $dist(\cdot,\cdot)$ to define distance between two ordered sets $X = (x_1, x_2, \ldots, x_n)$ and $X' = (x_1', x_2', \ldots, x_n')$ as $dist(X, X') = max_{i \in [n]} \ dist(x_i, x_i')$. With these notations in place, we define a robust function as follows:

\begin{definition}\label{def:rob_dataset}
A function $F: \mathcal{X} \rightarrow \R^n$ is said to be $(\eps,\delta)$ \emph{robust to adversarial perturbations} for a dataset $X$ with respect to a distance metric $dist(\cdot, \cdot)$ as defined above, if, for all datasets $X'$ such that $dist(X, X') \leq \epsilon$, $\norm{F(X)-F(X')} \leq \delta$. 
\end{definition}

If a function $F$ is $(\epsilon, \delta)$ robust for a dataset $X$, it implies that feature values of $99\%$ of the points in $X$ will not vary by more than $\frac{10\delta}{\sqrt{n}}$ if we were to perturb all points in $X$ by at most $\epsilon$.  

As in the case of robust functions of single datapoints, to preclude the possibility of some trivial functions  we require $F$ to satisfy certain conditions: 1) $F_{X}$ should have significant variance across points, say, $\sum_i (F_X(x_i) - \E_{x \sim Unif(X)}[F_X(x)])^2 = 1$. 2) Changing the order of points in dataset X should not change $F_X$, that is, for any permutation $\sigma: \R^n \rightarrow \R^n$, $F_{\sigma(X)} = \sigma(F_X)$. Given a data distribution $D$, and a threshold $\epsilon$, the goal will be to find a function $F$ that is as robust as possible, in expectation,  for a dataset $X$ drawn from $D$.

We mainly follow Definition \ref{def:rob_dataset} throughout the paper as the ideas behind our proposed features follow more naturally under that definition. However, we briefly discuss how to extend these ideas to come up with robust features of single datapoints (Definition \ref{def:rob_dist}) in section \ref{subsec:feature_extension}.

\subsection{Summary of Results}
In Section \ref{s:method}, we describe an approach to constructing features using spectral properties of an appropriately defined graph associated with a dataset in question. We show provable bounds for the adversarial robustness of these features. We also show a synthetic setting in which some of the existing models such as neural networks, and nearest-neighbor classifier are known to be vulnerable to adversarial perturbations, while our approach provably works well. In Section \ref{s:lower_bound}, we show a lower bound which, in certain parameter regimes, implies that if our spectral features are not robust, then no robust features exist. The lower bound suggests a fundamental connection between the spectral properties of the graph obtained from the dataset, and the inherent extent to which the data supports adversarial robustness. To explore this connection further, in Section \ref{s:experiments}, we show empirically that spectral properties do correlate with adversarial robustness. In Section \ref{s:experiments}, we also test our adversarial features on the downstream task of classification on adversarial images, and obtain promising results. All the proofs have been deferred to the appendix.

\subsection{Shortcomings and Future Work}
Our theory and empirics indicate that there may be fundamental connections between spectral properties of graphs associated with data and the inherent robustness to adversarial examples. A worthwhile future direction is to further clarify this connection, as it may prove illuminating and fruitful. Looking at the easier problem of finding adversarial features also presents the opportunity of developing interesting sample-complexity results for security against adversarial attacks. Such results may be much more difficult to prove for the problem of adversarially robust classification, since generalization is not well understood (even in the non-adversarial setting) for classification models such as neural networks.

Our current approach involves computing distances between all pair of points, and performing an eigenvector computation on a Laplacian matrix of a graph generated using these distances. Both of these steps are computationally expensive operations, and future work could address improving the efficiency of our approach.  In particular, it seems likely that similar spectral features can be approximated without computing all the pairwise distances, which would result in significant speed-up. We also note that our experiments for testing our features on downstream classification tasks on adversarial data is based on transfer attacks, and it may be possible to degrade this performance using stronger attacks. The main takeaway from this experiment is that our conceptually simple features along with a linear classifier is able to give competitive results for reasonable strong attacks. Future works can possibly explore using  robustly trained models on top of these spectral features, or using a spectral approach to distill the middle layers of neural networks.

\subsection{Related Work}

One of the very first methods proposed to defend against adversarial examples was adversarial training using the {\it fast gradient sign method} (FGSM) \citep{goodfellow2014explaining}, which involves taking a step in  the direction of the gradient of loss with respect to data, to generate adversarial examples, and training models on these examples. Later, \citet{madry2017towards} proposed a stronger {\it projected gradient descent} (PGD) training which essentially involves taking multiple steps in the direction of the gradient to generate adversarial examples followed by training on these examples.
More recently, \citet{kolter2017provable}, \citet{raghunathan2018certified}, and \citet{sinha2017certifiable}  have also made progress towards training provably adversarially robust models.  There have also been efforts towards proving lower bounds on the adversarial accuracy of neural networks, and using these lower bounds to train robust models \citep{hein2017formal,peck2017lower}. Most prior work addresses the question of how to fix the adversarial examples problem, and there is less work on identifying why this problem occurs in the first place, or highlighting which geometric properties of datasets  make them vulnerable to adversarial attacks.   Two recent works specifically address the ``why'' question: \citet{fawzi2018adversarial}  give lower bounds on robustness given a specific generative model of the data, and \citet{schmidt2018adversarially} and \citet{bubeck2018adversarial} describe settings in which limited computation or data are the primary bottleneck to finding a robust classifier.  In this work, by considering the simpler task of coming up with robust features, we provide a different perspective on both the questions of ``why'' adversarial perturbations are effective, and  ``how'' to ensure robustness to such attacks.

\subsection{Background: Spectral Graph Theory}
Let $G = (V(G), E(G))$ be an undirected, possibly weighted graph, where for notational simplicity $V(G) = \{1, \dots, n\}$. Let $A = (a_{ij})$ be the adjacency matrix of $G$, and $D$ be the the diagonal matrix whose $i$th diagonal entry is the sum of edge weights incident to vertex $i$. The matrix $L = D - A$ is called the {\it Laplacian matrix} of the graph $G$. The quadratic form, and hence the eigenvalues and eigenvectors, of $L$ carry a great deal of information about $G$. For example, for any $v \in \R^n$, we have
\begin{equation}
v^\transpose  L v = \sum_{(i, j) \in E(G) }a_{ij} \qty(v_i - v_j)^2.
\end{equation}
It is immediately apparent that $L$ has at least one eigenvalue of $0$: the vector $v_1 = (1, 1, \dots, 1)$ satisfies $v^\transpose  Lv = 0$. Further, the second (unit) eigenvector is the solution to the minimization problem 
\begin{equation}\label{eq:eigprob}
\min_{v} \sum_{(i, j) \in E }a_{ij} \qty(v_i - v_j)^2 \qq{s.t.} \sum_i v_i = 0;\quad \sum_i v_i^2 = 1.
\end{equation}
In other words, the second eigenvector assigns values to the vertices such that the average value is 0, the variance of the values across the vertices is 1, and among such assignments, minimizes the sum of the squares of the discrepancies between neighbors.  For example, in the case that the graph has two (or more) connected components, this second eigenvalue is 0, and the resulting eigenvector is constant on each connected component.  

Our original motivation for this work is the observation that, at least superficially, this characterization of the second eigenvector sounds similar in spirit to a characterization of a robust feature: here, neighboring vertices should have similar value, and for robust features, close points should be mapped to similar values.  The crucial question then is how to formalize this connection.  Specifically, is there a way to construct a graph such that the neighborhood structure of the graph captures the neighborhood of datapoints with respect to the metric in question?   We outline one such construction in Section~\ref{s:method}.


We will also consider the {\it normalized} or {\it scaled Laplacian}, which is defined by \begin{equation}L_\text{norm} = D^{-1/2}(D - A) D^{-1/2} = I - D^{-1/2} A D^{-1/2}.\end{equation} The scaled Laplacian normalizes the entries of $L$ by the total edge weights incident to each vertex, so that highly-irregular graphs do not have peculiar behavior.  For more background on spectral graph theory, we refer the readers to \citet{spielman2007spectral} and \citet{chung1997spectral}.

\section{Robust Features}\label{s:method}
In this section, we describe a construction of robust features, and prove bounds on their robustness.  Let $X = (x_1,\ldots,x_n )$ be our dataset, and let $\eps > 0$ be a threshold for attacks.  We construct a robust feature $F_X$ using the second eigenvector of the Laplacian of a graph corresponding to $X$, defined in terms of the metric in question. Formally, given the dataset $X$, and a distance threshold parameter $T>0$ which possibly depends on $\eps$, we define $F_X$ as follows:

 Define $G(X)$ to be the graph whose nodes correspond to points in $X$, i.e., $\{ x_1, \dots, x_n\}$, and for which there is an edge between nodes $x_i$ and $x_j$, if $dist(x_i, x_j) \le T$. Let $L(X)$ be the (un-normalized) Laplacian of $G(X)$, and let $\lambda_k(X)$ and $v_k(X)$ be its $k$th smallest eigenvalue and a corresponding unit eigenvector. In all our constructions, we assume that the first eigenvector $v_1(X)$ is set to be the unit vector proportional to the all-ones vector. Now define $F_X(x_i) = v_2\qty(X)_i$; i.e. the component of $v_2(X)$ corresponding to $x_i$. Note that $F_X$ defined this way satisfies the requirement of sufficient variance across points, namely, $\sum_i (F_X(x_i) - \E_{x \sim Unif(X)}[F_X(x)])^2 = 1$, since $\sum_i v_2(X)_i = 0$ and $\norm{v_2(X)} = 1$.
 
We now give robustness bounds for this choice of feature $F_X$. To do this, we will need slightly more notation. For a fixed $\eps > 0$, define the graph $G^+(X)$ to be the graph with the same nodes as $G(X)$, except that the threshold for an edge is $T + 2\eps$ instead of $T$. Formally, in $G^+(X)$, there is an edge between $x_i$ and $x_j$ if $dist(x_i, x_j) \le T + 2\epsilon$. Similarly, define $G^-(X)$ to be the graph with same set of nodes, with the threshold for an edge being $T - 2\eps$. Define $L^+(X), \lambda_k^+(X), v_k^+(X), L^-(X), \lambda_k^-(X), v_k^-(X)$ analogously to the earlier definitions. In the following theorem, we give robustness bounds on the function $F$ as defined above.

\begin{restatable}{theorem}{thmmain}\label{thm:main}
For any pair of datasets $X$ and $X'$, such that $dist(X, X') \le \eps$, the function $F: \mathcal{X}_n \rightarrow \R^n$ obtained using the second eigenvector of the Laplacian satisfies
\begin{equation}\label{eq:main}
\min(\norm{F(X) - F(X')}, \norm{(-F(X)) - (F(X'))}) \le \qty(2\sqrt{2})\sqrt{ \frac{\lambda_2^+(X) - \lambda_2^-(X)}{\lambda_3^-(X) - \lambda_2^-(X)}
}.
\end{equation}
\end{restatable}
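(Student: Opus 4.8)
The plan is to relate the second eigenvector $v_2(X')$ of the perturbed graph to the eigenspace structure of graphs built from $X$, exploiting the fact that small perturbations of the points only change the edge set in a controlled way. The key geometric observation is a sandwiching relation: since $dist(X,X')\le\eps$, the triangle inequality gives that every edge of $G(X')$ (threshold $T$) is an edge of $G^+(X)$ (threshold $T+2\eps$), and every edge of $G^-(X)$ (threshold $T-2\eps$) is an edge of $G(X')$. Consequently $L^-(X) \preceq L(X') \preceq L^+(X)$ in the Loewner order, and in particular, by the Courant--Fischer (min-max) characterization applied to the $2$-dimensional subspace spanned by the all-ones vector and $v_2(X')$, one obtains $\lambda_2^-(X) \le \lambda_2(X') \le \lambda_2^+(X)$. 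The same holds for $\lambda_2(X)$ itself. So both $v_2(X)$ and $v_2(X')$ are ``almost-minimizers'' of the quadratic form $v^\transpose L^-(X) v$ on the slice $\{\,\sum_i v_i = 0,\ \|v\|=1\,\}$: each attains a value at most $\lambda_2^+(X)$, whereas the true minimum there is $\lambda_2^-(X)$.

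The second ingredient is a standard spectral-perturbation / Davis--Kahan-type argument that converts ``near-optimality of a Rayleigh quotient'' into ``closeness to the optimal eigenvector,'' with the denominator being the spectral gap $\lambda_3^-(X) - \lambda_2^-(X)$. Concretely, write either unit vector $w \in \{v_2(X), v_2(X')\}$ (restricted to the hyperplane orthogonal to $\mathbf 1$) in the eigenbasis $\{v_2^-(X), v_3^-(X), \dots\}$ of $L^-(X)$ on that hyperplane, say $w = \sum_{k\ge 2} c_k v_k^-(X)$ with $\sum_k c_k^2 = 1$. Then $w^\transpose L^-(X) w = \sum_{k\ge 2} c_k^2 \lambda_k^-(X) \le \lambda_2^+(X)$, and subtracting $\lambda_2^-(X)\sum_k c_k^2$ yields $\sum_{k\ge 3} c_k^2 \big(\lambda_k^-(X) - \lambda_2^-(X)\big) \le \lambda_2^+(X) - \lambda_2^-(X)$, so that
\begin{equation}
\sum_{k\ge 3} c_k^2 \ \le\ \frac{\lambda_2^+(X) - \lambda_2^-(X)}{\lambda_3^-(X) - \lambda_2^-(X)} \;=:\; \beta,
\end{equation}
i.e.\ the ``mass off the leading eigenvector'' of each of $v_2(X), v_2(X')$ is at most $\beta$. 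Hence each of them is within $\ell_2$-distance roughly $\sqrt{\beta}$ of $\pm v_2^-(X)$ (the sign ambiguity being unavoidable since eigenvectors are defined only up to sign, which is exactly why the theorem states $\min(\|F(X)-F(X')\|, \|(-F(X))-F(X')\|)$). A triangle inequality then gives that, up to a global sign, $v_2(X)$ and $v_2(X')$ are within $\ell_2$-distance $O(\sqrt{\beta})$ of each other; tracking the constants (two vectors, each contributing, plus the conversion from correlation to distance for unit vectors) is what produces the $2\sqrt 2$ factor.

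The main obstacle is the bookkeeping around the sign/rotation ambiguity and getting a clean constant: since we only know $\langle w, v_2^-(X)\rangle^2 = c_2^2 \ge 1-\beta$, we can align the sign of $w$ so that $\langle w, v_2^-(X)\rangle \ge \sqrt{1-\beta}$, giving $\|w - v_2^-(X)\|^2 = 2 - 2\langle w, v_2^-(X)\rangle \le 2 - 2\sqrt{1-\beta} \le 2\beta$ after aligning, hence $\|w - v_2^-(X)\| \le \sqrt{2\beta}$; doing this for both $w = v_2(X)$ and $w = v_2(X')$ and adding gives $2\sqrt{2\beta} = 2\sqrt 2\,\sqrt\beta$, matching \eqref{eq:main}. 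One has to be a little careful that the signs chosen for the two vectors relative to $v_2^-(X)$ may differ, which is precisely why the left-hand side is a minimum over the two sign choices rather than $\|F(X)-F(X')\|$ outright. A minor subtlety worth checking is the degenerate case $\lambda_3^-(X) = \lambda_2^-(X)$ (no spectral gap), where the bound is vacuous and there is nothing to prove; and one should confirm that all the Rayleigh-quotient comparisons are legitimately taking place on the common hyperplane $\mathbf 1^\perp$, which holds because $\mathbf 1$ is the first eigenvector of every Laplacian involved and all these graphs share the same vertex set.
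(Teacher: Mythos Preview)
Your proposal is correct and follows essentially the same route as the paper's proof: sandwich $L(X')$ between $L^-(X)$ and $L^+(X)$ via the edge-set inclusions, use this to bound the Rayleigh quotient $v_2(X')^{\transpose}L^-(X)v_2(X')\le \lambda_2^+(X)$, decompose $v_2(X')$ in the eigenbasis of $L^-(X)$ to bound its mass off $v_2^-(X)$ by $(\lambda_2^+-\lambda_2^-)/(\lambda_3^--\lambda_2^-)$, convert this to an $\ell_2$ distance via $\|w-v_2^-\|^2\le 2(1-\sqrt{1-\beta})\le 2\beta$, and finish with the triangle inequality after applying the same estimate to $v_2(X)$. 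The only cosmetic difference is that the paper writes $v_2(X')=\alpha v_2^-+\beta v_\perp$ with a single orthogonal remainder $v_\perp$ rather than the full eigenbasis expansion $\sum_{k\ge 2}c_k v_k^-$, but these are equivalent and lead to the identical constant $2\sqrt{2}$.
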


Theorem \ref{thm:main} essentially guarantees that the features, as defined above, are robust up to sign-flip, as long as the eigengap between the second and third eigenvalues is large, and the second eigenvalue does not change significantly if we slightly perturb the distance threshold used to determine whether an edge exists in the graph in question. Note that flipping signs of the feature values of all points in a dataset (including training data) does not change the classification problem for most common classifiers. For instance, if there exists a linear classifier that fits points with features $F_X$ well, then a linear classifier can fit points with features $-F_X$ equally well. So, up to sign flip, the function $F$ is $(\eps, \delta_X)$ robust for dataset $X$, where $\delta_X$ corresponds to the bound given in Theorem \ref{thm:main}.

To understand this bound better, we discuss a toy example. Consider a dataset $X$ that consists of two clusters with the property that the distance between any two points in the same cluster is at most $4\epsilon$, and the distance between any two points in different clusters is at least $10 \epsilon$. Graph $G(X)$ with threshold $T = 6 \epsilon$, will have exactly two connected components. Note that $v_2(X)$ will perfectly separate the two connected components with $v_2(X)_i$ being $\frac{1}{\sqrt{n}}$ if $i$ belongs to component 1, and $\frac{-1}{\sqrt{n}}$ otherwise. In this simple case, we conclude immediately that $F_X$ is perfectly robust:  perturbing points by $\epsilon$  cannot change the connected component any point is identified with. Indeed, this agrees with Theorem \ref{thm:main}: $\lambda_2^+ = \lambda_2^- = 0$ since the two clusters are at a distance $> 10\eps$. 

Next, we briefly sketch the idea behind the proof of Theorem \ref{thm:main}. Consider the second eigenvector $v_2(X')$  of the Laplacian of the graph $G(X')$ where dataset $X'$ is obtained by perturbing points in $X$.  We argue that this eigenvector can not be too far from $v_2^-(X)$. For the sake of contradiction, consider the extreme case where $v_2(X')$ is orthogonal to $v_2^-(X)$. If the gap between the second and third eigenvalue of $G^-(X)$ is large, and the difference between $\lambda_2(X')$ and $\lambda_2^-(X)$ is small, then by replacing $v_3^-(X)$ with $v_2(X')$ as the third eigenvector of $G^-(X)$, we get a much smaller value for $\lambda_3^-(X)$, which is not possible. Hence, we show that the two eigenvectors in consideration can not be orthogonal. The proof of the theorem extends this argument to show that $v_2(X')$, and $v_2^-(X)$ need to be close if we have a large eigengap for $G^-(X)$, and a small gap between $\lambda_2(X')$ and $\lambda_2^-(X)$. Using a similar argument, one can show that $v_2(X)$ and $v_2^-(X)$, also need to be close. Applying the triangle inequality, we get that $v_2(X)$ and $v_2(X')$ are close. Also, since we do not have any control over $\lambda_2(X')$, we use an upper bound on it given by $\lambda_2(X+)$, and state our result in terms of the  gap between $\lambda_2(X^+)$ and  $\lambda_2(X^-)$.

The approach described above also naturally yields a construction of a {\it set} of robust features by considering the higher eigenvectors of Laplacian.  We define the $i^{\text{th}}$ feature vector for a dataset $X$ as $F^i_X = v_{i+1}(X)$. As the eigenvectors of a symmetric matrix are orthogonal, this gives us a set of $k$ diverse feature vectors $\{F^1_X, F^2_X, \dots, F^k_X\}$. Let $F_X(x) = (F_X^1(x),F_X^k(x), \dots, F_X^k(x))^T$ be a $k$-dimensional column vector denoting the feature values for point $x \in X$. In the following theorem, we give robustness bounds on these feature vectors.


\begin{restatable}{theorem}{thmmainmulti}\label{thm:main1}
For any pair of datasets $X$ and $X'$, such that $dist(X, X') \le \eps$, there exists a $k\times k$ invertible matrix $M$, such that the features $F_X$ and $F_{X'}$ satisfy
\begin{equation}\label{eq:main1}
\sqrt{\sum_{i \in [n]} \norm{M F_X(x_i) -  F_{X'}(x_i')}^2} \le \qty(2\sqrt{2 k}) \sqrt{ \frac{\lambda_{k+1}^+(X) - \lambda_2^-(X)}{\lambda_{k+2}^-(X) - \lambda_{2}^-(X)}
}
\end{equation}
\end{restatable}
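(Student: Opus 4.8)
The plan is to mirror the argument sketched in the text for Theorem~\ref{thm:main}, replacing the single eigenvector $v_2$ by the $n\times k$ matrices $V:=[v_2(X)\,|\,\cdots\,|\,v_{k+1}(X)]$ and $V':=[v_2(X')\,|\,\cdots\,|\,v_{k+1}(X')]$ (so that $F_X(x_i)$ is the $i$-th row of $V$), and comparing the column spans of \emph{both} of them to that of the intermediate matrix $U^-:=[v_2^-(X)\,|\,\cdots\,|\,v_{k+1}^-(X)]$ built from the ``$-$'' graph. First I would record the structural fact that makes $G^-(X)$ useful: since $dist(X,X')\le\eps$ forces $|dist(x_i,x_j)-dist(x_i',x_j')|\le 2\eps$ by the triangle inequality for $dist$, the edge sets satisfy $E(G^-(X))\subseteq E(G(X'))\subseteq E(G^+(X))$ and $E(G^-(X))\subseteq E(G(X))\subseteq E(G^+(X))$. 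Adding edges only increases a Laplacian in the PSD order, so $L^-(X)\preceq L(X')\preceq L^+(X)$ and $L^-(X)\preceq L(X)\preceq L^+(X)$; by Courant--Fischer the eigenvalues interlace, giving in particular $\lambda_{k+1}(X)\le\lambda_{k+1}^+(X)$ and $\lambda_{k+1}(X')\le\lambda_{k+1}^+(X)$. Note the normalized all-ones vector is the first eigenvector of $L(X)$, $L(X')$ and $L^-(X)$ alike, so all three matrices $V$, $V'$, $U^-$ have columns orthogonal to it.

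The heart of the proof is a Davis--Kahan-type estimate obtained directly from Rayleigh quotients. Let $W$ be $V$ or $V'$, let $\theta_1\le\cdots\le\theta_k$ be the principal angles between $\mathrm{span}(W)$ and $\mathrm{span}(U^-)$, and let $w$ be a unit vector in $\mathrm{span}(W)$ realizing the largest angle, so $\|w-P_{\mathrm{span}(U^-)}w\|^2=\sin^2\theta_k$. On one hand, $w$ lies in the span of the bottom $k+1$ eigenvectors of $L(X)$ (resp.\ $L(X')$) and is orthogonal to the first, so $w^\transpose L(X)w\le\lambda_{k+1}(X)\le\lambda_{k+1}^+(X)$ (resp.\ for $X'$), whence $w^\transpose L^-(X)w\le\lambda_{k+1}^+(X)$ by the PSD ordering. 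On the other hand, expanding $w=\sum_{j\ge 2}c_jv_j^-(X)$ (the $j=1$ coefficient vanishes since $w\perp\mathbf 1$) and splitting the spectral sum at index $k+2$ gives $w^\transpose L^-(X)w\ge\lambda_2^-(X)(1-\sin^2\theta_k)+\lambda_{k+2}^-(X)\sin^2\theta_k$, because $\sum_{j\ge k+2}c_j^2=\sin^2\theta_k$. Combining the two inequalities yields $\sin^2\theta_k\le\frac{\lambda_{k+1}^+(X)-\lambda_2^-(X)}{\lambda_{k+2}^-(X)-\lambda_2^-(X)}=:\rho$, and since every principal angle is at most $\theta_k$ we get $\sum_{i=1}^{k}\sin^2\theta_i\le k\rho$, for both $W=V$ and $W=V'$.

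It remains to turn these subspace bounds into a Frobenius bound on the feature matrices. By the orthogonal Procrustes identity, for $n\times k$ matrices with orthonormal columns $\min_{O\in O(k)}\|U^-O-W\|_F^2=2\sum_i(1-\cos\theta_i)\le 2\sum_i\sin^2\theta_i\le 2k\rho$, using that the singular values of $(U^-)^\transpose W$ are the cosines of the principal angles. Choosing orthogonal $O$ and $O'$ that attain this for $W=V$ and $W=V'$, I would set $M:=(O')^{\transpose}O$, which is orthogonal hence invertible (reducing to a sign when $k=1$), and, using invariance of $\|\cdot\|_F$ under orthogonal multiplication together with the triangle inequality, conclude $\|VM^{\transpose}-V'\|_F\le\|VO^{\transpose}-U^-\|_F+\|U^-O'-V'\|_F\le 2\sqrt{2k\rho}$. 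Since $\sum_{i\in[n]}\|MF_X(x_i)-F_{X'}(x_i')\|^2=\|VM^{\transpose}-V'\|_F^2$ by the row identification above, this is exactly \eqref{eq:main1}.

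The step I expect to be the main obstacle is the Rayleigh-quotient argument in the second paragraph: one must carefully locate the unit vector realizing the largest principal angle inside the correct eigenspace, verify that its $L^-(X)$-eigenbasis expansion has no $\mathbf 1$-component, and split the spectral sum at exactly index $k+2$. Two further points deserve attention and should be stated explicitly. First, passing from a bound on the single largest angle to $\sum_i\sin^2\theta_i$ costs a factor $k$ — this is the source of the $\sqrt{k}$ in the statement — and a tighter per-angle interlacing bound is not needed here. Second, the inequality is meaningful only when $\lambda_{k+2}^-(X)>\lambda_2^-(X)$; if $G^-(X)$ has $k+1$ or more connected components the right-hand side is infinite and the bound is vacuous. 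One should also double-check the convention that the all-ones vector is consistently the first eigenvector of all three Laplacians, which legitimizes the orthogonality $w\perp\mathbf 1$ used throughout.
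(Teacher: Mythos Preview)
Your proof is correct and follows essentially the same route as the paper: both arguments bound, via Rayleigh quotients against $L^-(X)$, the component of any unit vector in the bottom-$k$ eigenspace of $L(X)$ or $L(X')$ that falls outside $S_k^-=\mathrm{span}(v_2^-,\dots,v_{k+1}^-)$, obtaining $\sin^2\theta_k\le\rho:=\frac{\lambda_{k+1}^+-\lambda_2^-}{\lambda_{k+2}^--\lambda_2^-}$, and then pass through $S_k^-$ by the triangle inequality.

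The only real difference is the final assembly step. The paper projects the columns of $F(X')$ onto $S_k(X)$ and takes $M^\transpose$ to be the resulting change-of-basis matrix, which forces a separate case analysis when $2\sqrt{2\rho}\ge\sqrt2$ to guarantee that $M$ is invertible. Your orthogonal Procrustes argument is a cleaner alternative: it produces an \emph{orthogonal} $M=(O')^\transpose O$, so invertibility is automatic and no case split is needed, and it delivers the slightly stronger conclusion that $M$ can be taken to be an isometry. Both finishes give exactly the same constant $2\sqrt{2k}$.
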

Theorem \ref{thm:main1} is a generalization of Theorem \ref{thm:main}, and gives a bound on the robustness of feature vectors $F_X$  up to linear transformations. Note that applying an invertible linear transformation to all the points in a dataset (including training data) does not alter the classification problem for models invariant under linear transformations.  For instance, if there exists a binary linear classifier given by vector $w$, such that $sign(w^TF_X(x))$ corresponds to the true label for point $x$, then the classifier given by $(M^{-1})^Tw$ assigns the correct label to linearly transformed feature vector $MF_X(x)$.

\subsection{Extending a Feature to New Points}\label{subsec:feature_extension}
In the previous section, we discussed how to get robust features for points in a dataset. In this section, we briefly describe an extension of that approach to get robust features for points outside the dataset, as in Definition~\ref{def:rob_dist}. 

Let $X = \{x_1,\ldots,x_n \} \subset \R^d$ be the training dataset drawn from some underlying distribution $D$ over $\R^d$. We use $X$ as a reference to construct a robust function $f_X: \R^d \rightarrow \R$.  For the sake of convenience, we drop the subscript $X$ from  $f_X$ in the case where the dataset in question is clear. Given a point $x \in \R^d$, and a distance threshold parameter $T>0$, we define $f(x)$ as follows:

 Define $G(X)$ and $G(x)$ to be graphs whose nodes are points in dataset $X$, and $\{x\} \cup X= \{ x_0 = x, x_1, \dots, x_n\}$ respectively, and for which there is an edge between nodes $x_i$ and $x_j$, if $dist(x_i, x_j) \le T$.  Let $L(x)$ be the Laplacian of $G(x)$, and let $\lambda_k(x)$ and $v_k(x)$ be its $k$th smallest eigenvalue and a corresponding unit eigenvector. Similarly, define $L(X)$, $\lambda_k(X)$ and $v_k(X)$ for $G(X)$. In all our constructions, we assume that the first eigenvectors $v_1(X)$ and $v_1(x)$ are set to be the unit vector proportional to the all-ones vector. Now define $f(x) = v_2\qty(x)_0$; i.e. the component of $v_2(x)$ corresponding to $x_0 = x$. Note that the eigenvector $v_2(x)$ has to be picked ``consistently'' to avoid signflips in $f$ as $-v_2(x)$ is also a valid eigenvector.  To resolve this, we select the eigenvector $v_2(x)$ to be the eigenvector (with eigenvalue $\lambda_2(x)$) whose last $|X|$ entries has the maximum inner product with $v_2(X)$.

We now state a robustness bound for this feature $f$ as per Definition \ref{def:rob_dist}.  For a fixed $\eps > 0$ define the graph $G^+(x)$ to be the graph with the same nodes and edges of $G(x)$, except that the threshold for $x_0 = x$ is $T + \eps$ instead of $T$. Formally, in $G^+(x)$, there is an edge between $x_i$ and $x_j$ if:

\begin{enumerate}
\item[(a)] $i = 0$ or $j=0$, and $dist(x_i, x_j) \le T + \eps$; or
\item[(b)] $i > 0$ and $j>0$, and $dist(x_i, x_j) \le T$.
\end{enumerate}

Similarly, define $G^-(x)$ to be the same graph with $T + \eps$ replaced with $T - \eps$. Define $L^+, \lambda_k^+, v_k^+, L^-, \lambda_k^-, v_k^-$ analogously to the earlier definitions. In the following theorem, we give a robustness bound on the function $f$ as defined above.

\begin{restatable}{theorem}{thmptwise}\label{thm:ptwise}
For a sufficiently large training set size $n$, if $\E_{X \sim D} \qty[\qty(\lambda_3(X) - \lambda_2(X))^{-1}] \leq c$ for some small enough constant $c$, then with probability $0.95$ over the choice of $X$, the function $f_X: \R^d \rightarrow \R $ satisfies $\Pr_{x \sim D}[\exists x' \text{ s.t. } dist(x,x')\le\eps \text{ and } \abs{f_X(x) - f_X(x')} \geq \delta_x] \leq 0.05 $, for
\begin{equation}\label{eq:ptwise}
\delta_x = \qty(6\sqrt{2})\sqrt{ \frac{\lambda_2^+(x) - \lambda_2^-(x)}{\lambda_3^-(x) - \lambda_2^-(x)}}.
\end{equation}
This also implies that with probability $0.95$ over the choice of $X$, $f_X$ is $(\epsilon, 20 \E_{x \sim D}[\delta_x], 0.1)$ robust as per Definition \ref{def:rob_dist}.
\end{restatable}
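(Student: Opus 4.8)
The plan is to mirror the interlacing-plus-eigenvector-stability argument behind Theorem~\ref{thm:main}, now on the common vertex set $\{x_0,x_1,\dots,x_n\}$ of $G(x)$ and $G(x')$, and then to pay for two ``bad events''---one over the draw of $X$, one over the draw of $x$---with two applications of Markov's inequality. First I would record the edge-set inclusions that drive everything: the edges among $x_1,\dots,x_n$ coincide in $G^-(x),G(x),G(x'),G^+(x)$, and for the edges at $x_0$ the triangle inequality (using $dist(x,x')\le\eps$) gives $E(G^-(x))\subseteq E(G(x))\subseteq E(G^+(x))$ and $E(G^-(x))\subseteq E(G(x'))\subseteq E(G^+(x))$. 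Since each summand $a_{ij}(v_i-v_j)^2$ of a Laplacian quadratic form is nonnegative, enlarging the edge set only increases the form, so $v^\transpose L^-(x)v\le v^\transpose L(x)v\le v^\transpose L^+(x)v$ and $v^\transpose L^-(x)v\le v^\transpose L(x')v\le v^\transpose L^+(x)v$ for every $v$; by Courant--Fischer, $\lambda_k^-(x)\le\lambda_k(x)\le\lambda_k^+(x)$ and $\lambda_k^-(x)\le\lambda_k(x')\le\lambda_k^+(x)$ for all $k$.

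Next I would run the Davis--Kahan-type step with $v_2^-(x)$ as the common reference. The all-ones vector is the first eigenvector of every graph here and $v_2(x')\perp\mathbf 1$, so write $v_2(x')=\alpha\,v_2^-(x)+w_\perp$ with $w_\perp$ in the span of $v_3^-(x),v_4^-(x),\dots$ and $\alpha^2+\norm{w_\perp}^2=1$; then
\[
\lambda_2^-(x)\alpha^2+\lambda_3^-(x)\norm{w_\perp}^2\ \le\ v_2(x')^\transpose L^-(x)\,v_2(x')\ \le\ v_2(x')^\transpose L(x')\,v_2(x')=\lambda_2(x')\ \le\ \lambda_2^+(x),
\]
which forces $\norm{w_\perp}^2\le\rho_x:=\dfrac{\lambda_2^+(x)-\lambda_2^-(x)}{\lambda_3^-(x)-\lambda_2^-(x)}$, hence $\norm{v_2(x')-\sigma' v_2^-(x)}\le\sqrt{2\rho_x}$ for the sign $\sigma'=\sgn\alpha$; the same computation with $x$ replacing $x'$ gives $\norm{v_2(x)-\sigma v_2^-(x)}\le\sqrt{2\rho_x}$ for a sign $\sigma$. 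If $\rho_x\ge\tfrac12$ the claim is vacuous, as $\abs{f_X(x)-f_X(x')}\le\abs{v_2(x)_0}+\abs{v_2(x')_0}\le 2\le 6\sqrt2\sqrt{\rho_x}$; so assume $\rho_x<\tfrac12$. Since $f_X(\cdot)=v_2(\cdot)_0$ is a single coordinate---well defined once the sign-selection rule fixes the eigenvectors---and restriction does not increase norm,
\[
\abs{f_X(x)-f_X(x')}\ \le\ \abs{v_2(x)_0-\sigma v_2^-(x)_0}+\abs{\sigma v_2^-(x)_0-\sigma' v_2^-(x)_0}+\abs{\sigma' v_2^-(x)_0-v_2(x')_0}\ \le\ 2\sqrt{2\rho_x}+2\abs{v_2^-(x)_0},
\]
so it remains to show $\abs{v_2^-(x)_0}\le 2\sqrt{2\rho_x}$ for all but a $0.05$-fraction of $x$, which gives exactly $\delta_x=6\sqrt{2\rho_x}=6\sqrt2\sqrt{\rho_x}$.

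That last bound is where ``$n$ sufficiently large'' and the hypothesis enter. The graph $G^-(x)$ is $G(X)$ with one extra vertex $x_0$ joined to $S_x^-=\{i:dist(x,x_i)\le T-\eps\}$, and deleting $x_0$ from $L^-(x)$ leaves the grounded Laplacian $L(X)+D_0$ with $D_0\succeq 0$ diagonal; Cauchy interlacing relates the spectrum of $G^-(x)$ to that of $G(X)$, and when $G(X)$ has a healthy gap $\lambda_3(X)-\lambda_2(X)$ and $S_x^-$ is a healthy fraction of $[n]$, a spectral-perturbation argument shows $v_2^-(x)$ agrees up to sign with $v_2(X)$ on the $n$ shared coordinates to within small error, so its extra coordinate $v_2^-(x)_0$ carries only a vanishing part of the unit mass. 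The atypical (e.g.\ nearly isolated) $x$ for which this fails---and $\delta_x$ may then not bound the change---form a set of $D$-measure $\le 0.05$ once $n$ is large, since a random $x\sim D$ lies within $T-\eps$ of a healthy fraction of $n$ iid samples with probability $\to 1$. Finally the two Markov steps: applying Markov's inequality to $Z=(\lambda_3(X)-\lambda_2(X))^{-1}\ge 0$ with $\E_{X\sim D}[Z]\le c$ gives $\lambda_3(X)-\lambda_2(X)\ge 1/(20c)$ with probability $\ge0.95$ over $X$---the ``healthy gap'' used above---which proves the first assertion; and writing $\mu=\E_{x\sim D}[\delta_x]$, Markov gives $\Pr_{x\sim D}[\delta_x\ge 20\mu]\le 1/20$, so a union bound with the $0.05$ event that the $\delta_x$-bound fails yields $\Pr_{x\sim D}[\exists x':dist(x,x')\le\eps,\ \abs{f_X(x)-f_X(x')}\ge 20\,\E_{x\sim D}[\delta_x]]\le 0.1$, i.e.\ $(\eps,\,20\,\E_{x\sim D}[\delta_x],\,0.1)$-robustness in the sense of Definition~\ref{def:rob_dist}.

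The main obstacle is precisely the claim $\abs{v_2^-(x)_0}\le 2\sqrt{2\rho_x}$ for typical $x$. Because appending a vertex changes the dimension of the Laplacian, one cannot bound the change in the second eigenvector by a naive ``norm of the perturbation'' estimate; one must go through Cauchy interlacing for the grounded Laplacian $L(X)+D_0$ and then rule out that the added diagonal mass $D_0$ creates a spurious low-eigenvalue mode localized on $x_0$---exactly what happens when $x$ is (nearly) isolated, and the reason only a $0.95$ fraction of points can be covered. Making ``$n$ sufficiently large'' and ``typical $x$'' quantitative---turning ``$x$ lies within $T-\eps$ of a good fraction of $n$ iid samples'' into a clean $0.95$-probability statement with the right dependence on $D,T,\eps$, and checking the constant comes out as $6\sqrt2$---is the other place where genuine work is required.
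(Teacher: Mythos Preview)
Your interlacing/Davis--Kahan step is fine and matches the paper's Lemma on robustness up to sign: with $\rho_x=\dfrac{\lambda_2^+(x)-\lambda_2^-(x)}{\lambda_3^-(x)-\lambda_2^-(x)}$ you correctly get $\min(\|v_2(x)\pm v_2(x')\|)\le 2\sqrt{2\rho_x}$. The Markov/union-bound conclusion for the $(\eps,20\mu,0.1)$ part is also what the paper does.

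The gap is in how you resolve the sign. You bound the mismatch term by $2|v_2^-(x)_0|$ and then assert that for typical $x$ one has $|v_2^-(x)_0|\le 2\sqrt{2\rho_x}$, so the total becomes $6\sqrt{2\rho_x}$. That inequality is not true, and the ``$v_2^-(x)_0$ carries a vanishing part of the unit mass'' heuristic does not imply it: the left side is an $x$-independent scale (roughly $1/\sqrt{n}$), while the right side can be arbitrarily small or even $0$. Concretely, take $X$ to be two well-separated clusters so that $G^-(x),G(x),G(x'),G^+(x)$ all have the same two connected components regardless of $x'$ with $dist(x,x')\le\eps$. Then $\lambda_2^+(x)=\lambda_2^-(x)=0$, hence $\rho_x=0$ for \emph{every} $x$, while $v_2^-(x)$ is the (normalized) cluster indicator with $|v_2^-(x)_0|\asymp 1/\sqrt{n}>0$. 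Your inequality fails on a set of $x$-measure $1$, not $0.05$, and your bound would give $|f_X(x)-f_X(x')|\le 2|v_2^-(x)_0|>0=\delta_x$ for all $x$. So the triangle-inequality route through $|v_2^-(x)_0|$ cannot recover the statement with the $x$-dependent threshold $\delta_x$.

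What the paper does instead is use the sign-selection rule itself, not bypass it. Recall that $v_2(x)$ is chosen so that its last $n$ coordinates $v_2^*(x)$ have maximal inner product with the fixed reference $v_2(X)$. The paper first shows (via a computation comparing $L(x)$ and $L(X)$) that $\langle v_2^*(x),v_2(X)\rangle$ is bounded below in terms of $\lambda_2(x),\lambda_2(X),\lambda_3(X)$ and $v_2(x)_0^2$; then, using $\E_X[(\lambda_3(X)-\lambda_2(X))^{-1}]\le c$, a Markov argument and the fact that $\E[v_2(x)_0^2]=1/(n+1)$ yields that with probability $\ge 0.95$ over $X$ and $\ge 0.95$ over $x$ one has $\langle v_2^*(x),v_2(X)\rangle>1/\sqrt 2$. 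The point of this threshold is geometric: once both $v_2(x)$ and $v_2(x')$ are selected to have nonnegative inner product with $v_2(X)$ (prepended with $0$), and $v_2(x)$ has inner product exceeding $1/\sqrt 2$, a two-dimensional argument in $\mathrm{span}\{v_2(x),v_2(x')\}$ gives $\|v_2(x)-v_2(x')\|\le 3\min(\|v_2(x)\pm v_2(x')\|)\le 6\sqrt{2\rho_x}$. That is where the constant $6\sqrt 2=3\cdot 2\sqrt 2$ comes from---not from $2\sqrt{2\rho_x}+4\sqrt{2\rho_x}$. Replacing your $|v_2^-(x)_0|$ step with this sign-alignment lemma repairs the argument.
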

This bound is very similar to bound obtained in Theorem \ref{thm:main}, and says that the function $f$ is robust, as long as the eigengap between the second and third eigenvalues is sufficiently large for  $G(X)$ and $G^-(x)$, and the second eigenvalue does not change significantly if we slightly perturb the distance threshold used to determine whether an edge exists in the graph in question. Similarly, one can also obtain a set of $k$ features, by taking the first $k$ eigenvectors of $G(X)$ prepended with zero, and projecting them onto the bottom-$k$ eigenspace of $G(x)$.

\section{A Lower Bound on Adversarial Robustness}\label{s:lower_bound}

In this section, we show that spectral properties yield  a lower bound on the robustness of any function on a dataset.  We show that if there exists an $(\eps,\delta)$ robust function $F'$ on dataset $X$, then the spectral approach (with appropriately chosen threshold), will yield an $(\eps', \delta')$ robust function, where the relationship between $\eps, \delta$ and $\eps', \delta'$ is governed by easily computable properties of the dataset, $X$.  This immediately provides a way of establishing a bound on the best possible robustness that dataset $X$ could permit for perturbations of magnitude $\eps$.  Furthermore it suggests that the spectral properties of the neighborhood graphs we consider, may be inherently related to the robustness that a dataset allows.   We now formally state our lower bound:


\begin{restatable}{theorem}{thmlower}\label{thm:lower}
Assume that there exists some $(\eps,\delta)$ robust function $F^*$ for the dataset $X$ (not necessarily constructed via the spectral approach). For any threshold $T$, let $G_T$ be the graph obtained on $X$ by thresholding at $T$. Let $d_T$ be the maximum degree of $G_T$. Then the feature $F$ returned by the spectral approach on the graph $G_{2\eps/3}$ is at least $(\eps/6,\delta')$ robust (up to sign), for 
\[    \delta'= \delta\sqrt{\frac{8(d_{\eps}+1)}{\lambda_3(G_{\eps/3})-\lambda_2(G_{\eps/3})}}.
\]
\end{restatable}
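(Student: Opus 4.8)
The plan is to deduce Theorem~\ref{thm:lower} from Theorem~\ref{thm:main} by feeding the hypothesised robust feature $F^*$ (which, as throughout, I take to be admissible: unit variance across $X$ and permutation-equivariant, since otherwise the hypothesis is vacuous) into the eigenvalue expression of Theorem~\ref{thm:main} as a Rayleigh-quotient test vector. First I would instantiate Theorem~\ref{thm:main} with construction threshold $T=2\eps/3$ and attack radius $\eps/6$. Then $G^+(X)$ uses threshold $T+2\cdot(\eps/6)=\eps$, i.e. $G^+(X)=G_\eps$, while $G^-(X)$ uses threshold $T-2\cdot(\eps/6)=\eps/3$, i.e. $G^-(X)=G_{\eps/3}$. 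Hence Theorem~\ref{thm:main} already says the spectral feature $F$ built on $G_{2\eps/3}$ is $(\eps/6,\delta')$ robust up to sign with
\[
\delta'=\qty(2\sqrt2)\sqrt{\frac{\lambda_2(G_\eps)-\lambda_2(G_{\eps/3})}{\lambda_3(G_{\eps/3})-\lambda_2(G_{\eps/3})}}.
\]
Since $\lambda_2(G_{\eps/3})\ge 0$, it remains to show $\lambda_2(G_\eps)$ is at most a constant times $\delta^2(d_\eps+1)$; substituting such a bound and simplifying then produces exactly the form in the theorem.

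To bound $\lambda_2(G_\eps)$, I would use the variational characterisation in \eqref{eq:eigprob}: for any unit $v$ with $\sum_i v_i=0$, $\lambda_2(G_\eps)\le\sum_{(i,j)\in E(G_\eps)}(v_i-v_j)^2$. I take $v$ to be $F^*(X)$ after centering: the variance-one requirement makes this a unit vector orthogonal to the all-ones vector, and centering leaves every difference $v_i-v_j=F^*(X)_i-F^*(X)_j$ unchanged, so $\lambda_2(G_\eps)\le\sum_{(i,j)\in E(G_\eps)}\qty(F^*(X)_i-F^*(X)_j)^2$.

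The crux is to bound this sum using robustness of $F^*$ together with permutation-equivariance, and the way to spend the $\ell_2$ budget $\delta$ efficiently is to split $E(G_\eps)$ into matchings. I would first establish: for any matching $M\subseteq E(G_\eps)$, $\sum_{(i,j)\in M}\qty(F^*(X)_i-F^*(X)_j)^2\le 2\delta^2$. Indeed, since each $(i,j)\in M$ has $dist(x_i,x_j)\le\eps$ and $M$ is vertex-disjoint, there is a single dataset $X^M$ with $dist(X,X^M)\le\eps$ in which $x_i^M=x_j^M$ for every $(i,j)\in M$ — collapse the two endpoints of each matching edge onto a common point (onto one endpoint, which costs at most $\eps$, or onto their midpoint when the metric admits one). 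Permutation-equivariance then forces $F^*(X^M)_i=F^*(X^M)_j$ on every $(i,j)\in M$, since transposing two coincident coordinates fixes $X^M$ and hence fixes $F^*(X^M)$. Writing $F^*(X)_i-F^*(X)_j=\qty(F^*(X)_i-F^*(X^M)_i)-\qty(F^*(X)_j-F^*(X^M)_j)$, squaring, and summing over the vertex-disjoint matching yields $\sum_{(i,j)\in M}(\cdot)^2\le 2\norm{F^*(X)-F^*(X^M)}^2\le 2\delta^2$. Finally, by Vizing's theorem the edges of $G_\eps$ decompose into at most $d_\eps+1$ matchings, so summing the per-matching estimates gives $\lambda_2(G_\eps)\le 2(d_\eps+1)\delta^2$; plugging this into the displayed $\delta'$ and using $\lambda_2(G_{\eps/3})\ge 0$ gives a bound of the stated form.

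I expect the main obstacle to be the conceptual reduction itself — converting ``some admissible $(\eps,\delta)$ robust feature exists'' into ``$G_\eps$ has small Fiedler value'' — which rests on three ingredients: using $F^*(X)$ directly as the test vector, invoking permutation-equivariance so collapsed points are forced to receive equal feature values, and decomposing $E(G_\eps)$ into few matchings so the $\ell_2$ budget $\delta$ is charged only once per color class. The rest is routine but requires care: checking that $F^*(X)$ may be taken centered with unit norm (from the variance requirement), that the collapse perturbation stays within the $\eps$ budget, and tracking constants through Theorem~\ref{thm:main} and the matching argument to land on the precise $\delta'$ — the off-the-shelf estimate $\lambda_2(G_\eps)\le 2(d_\eps+1)\delta^2$ already yields a bound of the stated form up to an absolute constant.
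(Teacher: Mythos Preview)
Your proposal is correct and is essentially the paper's proof: instantiate Theorem~\ref{thm:main} at $T=2\eps/3$ with attack radius $\eps/6$, bound $\lambda_2(G_\eps)$ by using $F^*(X)$ as a Rayleigh test vector, and control the resulting edge sum via a Vizing decomposition into $d_\eps+1$ matchings. The only variation is the per-matching perturbation: rather than collapsing the two endpoints onto a common point, the paper \emph{swaps} them (so $X'$ is a permutation of $X$ and equivariance gives $F^*(X')_i=v_j$ directly), which yields $\sum_{(i,j)\in M}(v_i-v_j)^2\le\delta^2$ and hence the exact constant $8$ in place of your $2\delta^2$.
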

The bound gives reasonable guarantees when the degree is small and the spectral gap is large.  To produce meaningful bounds, the neighborhood graph must have some structure at the threshold in question; in many practical settings, this would require an extremely large dataset, and hence this bound is mainly of theoretical interest at this point. Still, our experimental results in Section \ref{s:experiments} empirically validate the hypothesis that spectral properties have implications for the robustness of any model: we show that the robustness of an adversarially trained neural network on different data distributions correlates with the spectral properties of the distribution. 

\section{Synthetic Setting: Adversarial Spheres}\label{s:synthetic}

\cite{gilmer2018adversarial} devise a situation in which they are able to show in theory that training adversarially robust models is difficult. The authors describe the ``concentric spheres dataset'', which consists of---as the name suggests---two concentric $d$-dimensional spheres, one of radius $1$ and one of radius $R > 1$. The authors then argue that any classifier that misclassifies even a small fraction of the inner sphere will have a significant drop in adversarial robustness.

We argue that our method, in fact, yields a near-perfect classifier---one that makes almost no errors on natural or adversarial examples---even when trained on a modest amount of data. To see this, consider a sample of $2N$ training points from the dataset, $N$ from the inner sphere and $N$ from the outer sphere.  Observe that the distance between two uniformly chosen points on a sphere of radius $r$ is close to $r \sqrt{2}$. In particular, the median distance between two such points is exactly $r \sqrt{2}$, and with high probability for large $d$, the distance will be within some small radius $\eps$ of $r \sqrt{2}$. Thus, for distance threshold $\sqrt{2}+2\eps$, after adding a new test point to the training data, we will get a graph with large clique corresponding to the inner sphere, and isolated points on the outer sphere, with high probability. This structure doesn't change by perturbing the test point by $\epsilon$, resulting in a robust classifier. We now formalize this intuition. 

Let the inner sphere be of radius one, and outer sphere be of some constant radius $R > 1$. Let $\eps = (R-1)/8$ be the radius of possible perturbations. Then we can state the following:
\begin{restatable}{theorem}{thmadvspheres}\label{thm:advspheres}
Pick initial distance threshold $T = \sqrt{2} + 2\eps$ in the $\ell_2$ norm, and use the first $N+1$ eigenvectors as proposed in Section \ref{subsec:feature_extension} to construct a $(N+1)$-dimensional feature map $f : \R^d \to \R^{N+1}$. Then with probability at least $1-N^2 e^{-\Omega(d)}$ over the random choice of training set, $f$ maps the entire inner sphere to the same point, and the entire outer sphere to some other point, except for a $\gamma$-fraction of both spheres, where $\gamma = Ne^{-\Omega(d)}$. In particular, $f$ is $(\eps, 0, \gamma)$-robust.
\end{restatable}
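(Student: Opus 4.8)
The plan is to reduce everything to one geometric fact about uniform points on spheres in high dimension: writing $\norm{u-v}^2=\norm{u}^2+\norm{v}^2-2\langle u,v\rangle$ and using that the inner product of two independent uniform vectors on a radius-$r$ and a radius-$r'$ sphere is a mean-zero sub-Gaussian variable with parameter $O(r r'/\sqrt d)$, for any fixed $t>0$ a single same-sphere distance deviates from $r\sqrt2$ by more than $t$, and a single cross-sphere distance deviates from $\sqrt{1+R^2}$ by more than $t$, only with probability $e^{-\Omega(d)}$. Fix $t$ to be a positive constant smaller than $\eps=(R-1)/8$ and than each of the constant gaps appearing below. A union bound over the $\binom{2N}{2}$ pairs of the training set $X$ then shows that, with probability $\ge 1-N^2 e^{-\Omega(d)}$, every inner--inner training distance lies in $[\sqrt2-t,\sqrt2+t]$, every inner--outer distance exceeds $\sqrt{1+R^2}-t$, and every outer--outer distance exceeds $R\sqrt2-t$; condition on this event.

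Next I would verify the elementary inequality chain that pins down the neighbourhood graph. With $T=\sqrt2+2\eps$ one has $\sqrt2+t<T-\eps$, while both $\sqrt{1+R^2}>\sqrt2+3\eps=T+\eps$ and $R\sqrt2>T+\eps$ hold with a gap that is a positive constant in $R$ (the first because $\tfrac{d}{dR}\sqrt{1+R^2}\ge 1/\sqrt2>3/8$ on $[1,\infty)$, the second because $\sqrt2>3/8$). Hence, on the good event, for any test point $x$ on the inner sphere the graphs $G(x)$, $G^+(x)$ and $G^-(x)$ coincide --- an $(N{+}1)$-clique on $\{x\}\cup(\text{inner training points})$ plus $N$ isolated outer vertices --- and for a test point $y$ on the outer sphere they coincide as an $N$-clique on the inner training points plus $N{+}1$ isolated vertices. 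Moreover this combinatorial structure is unchanged when $x$ is replaced by any $x'$ with $\norm{x-x'}\le\eps$, since each distance involving $x$ moves by at most $\eps$ and the gaps above absorb that. Since the feature map of Section~\ref{subsec:feature_extension} is a deterministic function of the (fixed) reference eigenvectors of $G(X)$ and of the bottom eigenspace of $G(x)$, and the latter depends on $x$ only through this graph, we get $f(x')=f(x)$: $f$ is perfectly robust at $x$. A union bound over the $2N$ distances from a fresh test point to $X$ bounds the measure of ``bad'' test points on either sphere by $\gamma=N e^{-\Omega(d)}$, giving the claimed $(\eps,0,\gamma)$-robustness.

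It then remains to show $f$ collapses each sphere to a single point, with the two points distinct, using that the kernel of a graph Laplacian is spanned by the indicator vectors of its connected components. For an inner test point $x$, $\ker L(G(x))$ is the $(N{+}1)$-dimensional space of vectors constant on the block $S=\{x\}\cup(\text{inner training coordinates})$; projecting the zero-prepended reference eigenvectors of $G(X)$ onto it replaces their values on $S$ (which are $0$ on the new coordinate and a common value $c_i$ on the inner coordinates) by the average $Nc_i/(N{+}1)$ and leaves the outer coordinates untouched, so after Gram--Schmidt one obtains an orthonormal basis $w_1,\dots,w_{N+1}$ of $\ker L(G(x))$ whose value on the new coordinate equals its common value on $S$ --- the \emph{same} vectors for every inner-sphere $x$, since $S$ and the reference vectors do not depend on which inner point $x$ is. As the $w_i$ span all of $\ker L(G(x))$ they cannot all vanish on $S$, so $f(x)\neq 0$ and is constant over the inner sphere. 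For an outer test point $y$, the zero-prepended reference eigenvectors already lie in $\ker L(G(y))$ (constant on the inner clique, vanishing on the new coordinate), so the projection is the identity and $f(y)$ is their value on the new coordinate, namely $0$; thus $f$ sends the entire outer sphere to the origin, distinct from its value on the inner sphere.

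I expect the last paragraph to contain the only real subtlety: for an outer test point $\ker L(G(y))$ has dimension $N{+}2$ rather than $N{+}1$, so the ``bottom-$(N{+}1)$ eigenspace'' named in Section~\ref{subsec:feature_extension} is not literally unique, and one must check that the natural reading --- projecting onto a zero-eigenspace that already contains the reference vectors, so that the projection fixes them --- yields a well-defined feature that is genuinely constant across the outer sphere rather than assigning distinct coordinates to the distinct isolated outer points. The remaining ingredients --- sphere concentration, the union bounds, and the distance inequalities --- are routine once the constant-size gaps are identified.
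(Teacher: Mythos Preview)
Your proposal is correct and follows essentially the same route as the paper: the same concentration-of-distance argument, the same union bounds over training pairs and over test-to-training distances, and the same identification of the combinatorial graph structure that persists under $\eps$-perturbation. The only presentational difference is in verifying that inner and outer features differ---you argue that the projected reference vectors span the kernel of $L(G(x))$ for an inner test point and hence cannot all vanish on the test coordinate, whereas the paper exhibits a single linear combination $v^* = (0,1,\dots,1,0,\dots,0)$ and tracks its first coordinate under projection; you also correctly flag the ambiguity of the ``bottom-$(N{+}1)$ eigenspace'' when the outer-sphere kernel has dimension $N{+}2$, a subtlety the paper's own proof does not address.
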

The extremely nice form of the constructed feature $f$ in this case means that, if we use half of the training set to get the feature map $f$, and the other half to train a linear classifier (or, indeed, any nontrivial model at all) trained on top of this feature, this will yield a near-perfect classifier even against adversarial attacks.
The adversarial spheres example is a case in which our method allows us to make a robust classifier, but other common methods do not. For example, nearest-neighbors will fail at classifying the outer sphere (since points on the outer sphere are generally closer to points on the inner sphere than to other points on the outer sphere), and \citet{gilmer2018adversarial} demonstrate in practice that training adversarially robust models on the concentric spheres dataset using standard neural network architectures is extremely difficult when the dimension $d$ grows large.

\section{Experiments}\label{s:experiments}

\subsection{Image Classification: The MNIST Dataset}
While the main focus of our work is to improve the conceptual understanding of adversarial robustness, we also perform experiments on the MNIST dataset. We test the efficacy of our features by evaluating them on the downstream task of classifying adversarial images. We used a subset of MNIST dataset, which is commonly used in discussions of adversarial examples \citep{goodfellow2014explaining,szegedy2013intriguing,madry2017towards}.  Our dataset has 11,000 images of hand written digits from zero to nine, of which 10,000 images are used for training, and rest for test. We compare three different models, the specifics of which are given below:
\begin{figure}
    \centering
    \begin{minipage}{0.45\textwidth}
        \centering
        \includegraphics[width=1\textwidth]{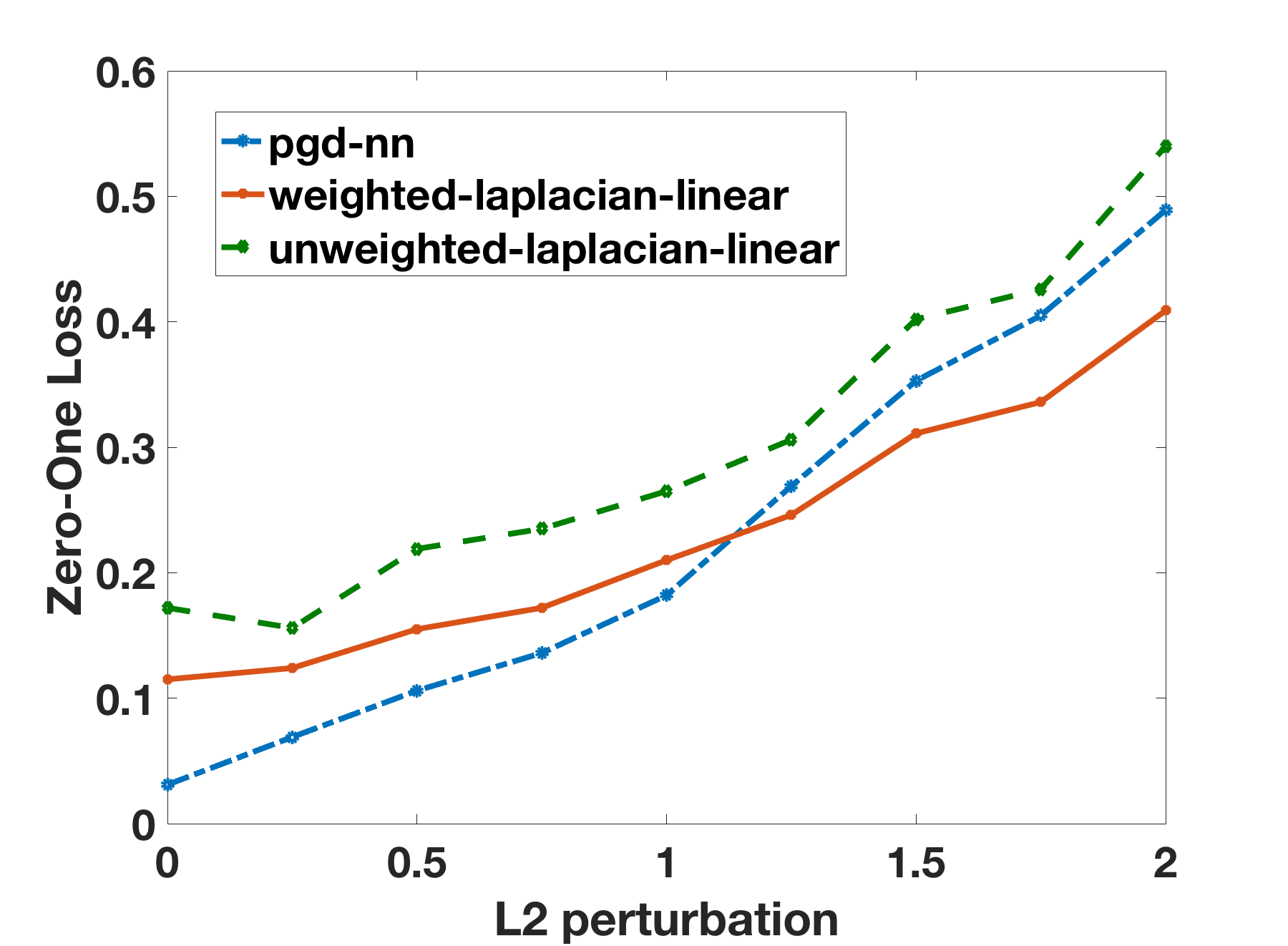} 
        \caption{Comparison of performance on adversarially perturbed MNIST data .}
        \label{fig:nn_comparison}
    \end{minipage}\hfill
    \begin{minipage}{0.45\textwidth}
        \centering
        \includegraphics[width=1\textwidth]{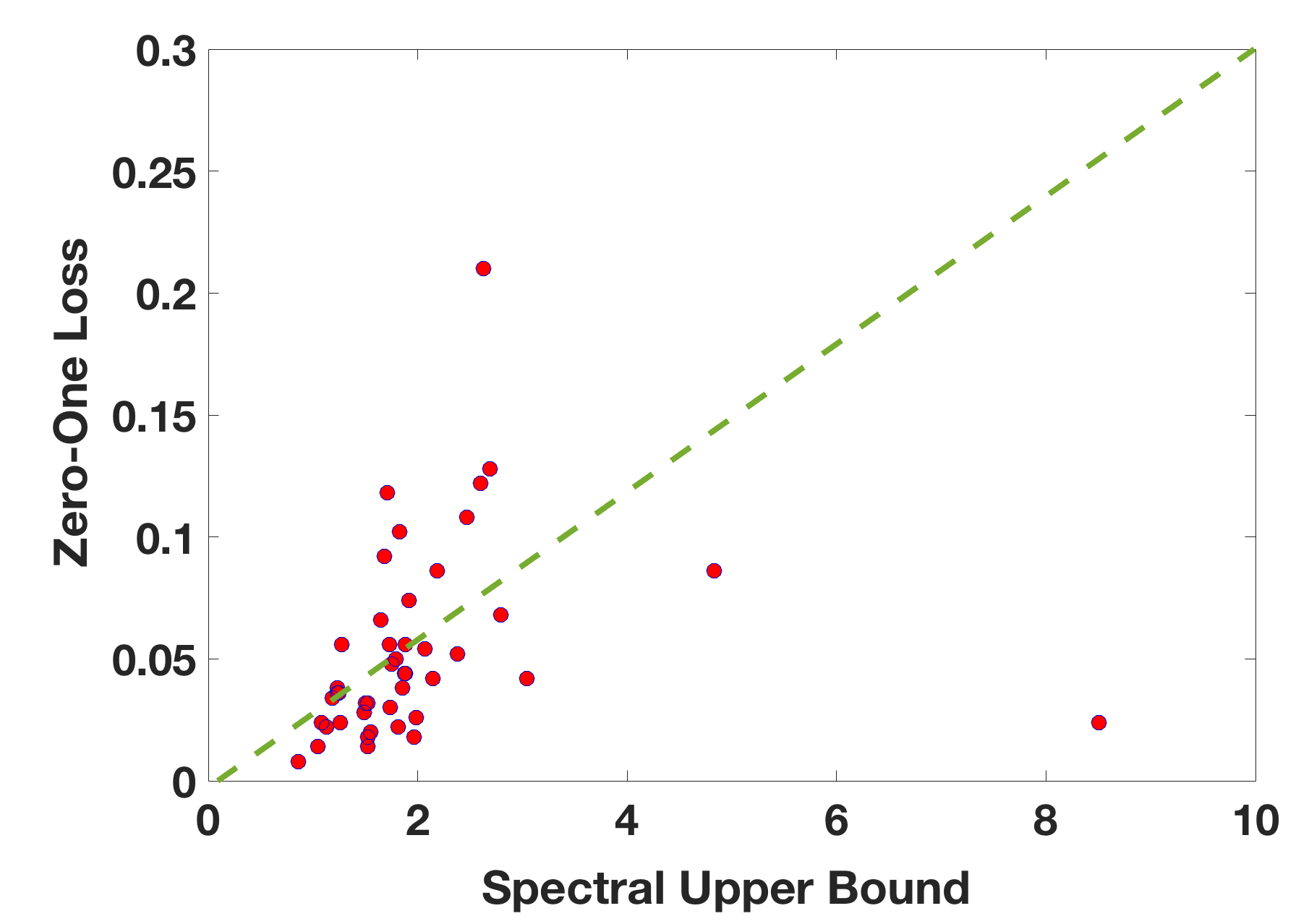} 
        \caption{Performance on adversarial data vs our upper bound.} 
        \label{fig:correlation}
    \end{minipage}
\end{figure}

 \textbf{Robust neural network (pgd-nn)}: We consider a fully connected neural network with one hidden layer having 200 units, with ReLU non-linearity, and cross-entropy loss. We use PyTorch implementation of Adam \citep{kingma2014adam} for optimization with a step size of $0.001$. To obtain a robust neural network, we generate adversarial examples using projected gradient descent for each mini-batch, and train our model on these examples. For projected gradient descent, we use a step size of $0.1$ for $40$ iterations.
 
  \textbf{Spectral features obtained using scaled Laplacian,  and linear classifier (unweighted-laplacian-linear)}: We use the $\ell_2$ norm as a distance metric, and distance threshold $T = 9$ to construct a graph on all 11,000 data points. Since the distances between training points are highly irregular, our constructed graph is also highly irregular; thus, we use the scaled Laplacian to construct our features. Our features are obtained from the 20 eigenvectors corresponding to $\lambda_2$ to $\lambda_{21}$. Thus each image is mapped to a feature vector in  $\R^{20}$. On top of these features, we use a linear classifier with cross-entropy loss for classification. We train the linear classifier using 10,000 images, and test it on 1,000 images  obtained by adversarially perturbing test images.
  
  \textbf{Spectral features obtained using scaled Laplacian with weighted edges, and linear classifier (weighted-laplacian-linear)}: This is similar to the previous model, with the only difference being the way in which the graph is constructed. Instead of using a fixed threshold, we have weighted edges between all pairs of images, with the weight on the edge between image $i$ and $j$ being $\exp(-0.1\norm{x_i - x_j}_2^2)$. As before, we use 20 eigenvectors corresponding to the scaled Laplacian of this graph, with a linear classifier for classification.
  
Note that generating our features involve computing distances between all pair of images, followed by an eigenvector computation. Therefore, finding the gradient (with respect to the image coordinates) of classifiers built on top of these features is computationally extremely expensive. As previous works \citep{papernot2016transferability} have shown that transfer attacks can successfully fool many different models, we use transfer attacks using adversarial images corresponding to robust neural networks (pgd-nn).

The performance of these models on adversarial data is shown in figure \ref{fig:nn_comparison}. We observe that weighted-laplacian-linear performs better than pgd-nn on large enough perturbations. Note that it is possible that robustly trained deep convolutional neural nets perform better than our model. It is also possible that the performance of our models may deteriorate with stronger attacks.  Still, our conceptually simple features, with just a linear classifier on top, are able to give competitive results against reasonably strong adversaries. It is possible that training robust neural networks on top of these features, or using such features for the middle layers of neural nets may give significantly more robust models. Therefore, our experiments should be considered mainly as a proof of concept, indicating that spectral features may be a useful tool in one's toolkit for adversarial robustness. 

We also observe that features from weighted graphs perform better than their unweighted counterpart. This is likely because the weighted graph contains more  information about the distances, while most of this information is lost via thresholding in the unweighted graph. 
\vspace{-8pt}
\subsection{Connection Between Spectral Properties and Robustness}
We hypothesize that the spectral properties of the graph associated with a dataset has fundamental connections with its adversarial robustness. The lower bound shown in section \ref{s:lower_bound} sheds some more light on this connection. In Theorem \ref{thm:main}, we show that adversarial robustness is proportional to $\sqrt{ (\lambda_2^+ - \lambda_2^-)/(\lambda_3^- - \lambda_2^-)}$. To study this connection empirically, we created 45 datasets corresponding to each pair of digits in MNIST. As we expect some pairs of digits to be less robust to adversarial perturbations than others, we compare our spectral bounds for these various datasets,  to their observed adversarial accuracies.

\textbf{Setup:} The dataset for each pair of digits has 5000 data points, with 4000 points used as the training set, and 1000 points used as the test set. Similarly to the previous subsection, we trained robust neural nets  on these datasets. We considered  fully connected neural nets with one hidden layer having 50 units, with ReLU non-linearity, and cross-entropy loss. For each mini-batch, we generated adversarial examples using projected gradient descent with a step size of 0.2 for 20 iterations, and trained the neural net on these examples. Finally, to test this model, we generated adversarial perturbations of size $1$ in $\ell_2$ norm to obtain the adversarial accuracy for all 45 datasets. 

To get a bound for each dataset $X$, we generated two graphs $G^-(X)$, and $G^+(X)$  with all $5000$ points (not involving adversarial data). We use the $\ell_2$ norm as a distance metric. The distance threshold $T$ for $G^-(X)$ is set to be the smallest value such that  each node has degree at least one, and the threshold for $G^+(X)$ is two more than that of $G^-(X)$. We calculated the eigenvalues of the scaled Laplacians of these graphs to obtain our theoretical bounds.

\textbf{Observations:} As shown in Figure \ref{fig:correlation}, we observe some correlation between our upper bounds and the empirical adversarial robustness of the datasets. Each dataset is represented by a point in Figure \ref{fig:correlation}, where the x-axis is proportional to our bound, and the y-axis indicates the zero-one loss of the neural nets on adversarial examples generated from that dataset. The correlation is 0.52 after removing the right-most outlier. 
While this correlation is not too strong, it suggests some connection between our spectral bounds on the robustness and the empirical robustness of certain attack/defense heuristics.

\vspace{-0.5em}
\section{Conclusion}
\vspace{-0.5em}
We considered the task of learning adversarially robust features as a simplification of the more common goal of learning adversarially robust classifiers. We showed that this task has a natural connection to spectral graph theory, and that spectral properties of a graph associated to the underlying data have implications for the robustness of any feature learned on the data. We believe that  exploring this simpler task of learning robust features, and further developing the connections to spectral graph theory, are promising steps towards the end goal of building robust machine learning models. 


\medskip
\noindent \textbf{Acknowledgments:}  This work was supported by NSF awards CCF-1704417 and 1813049, and an ONR Young Investigator Award (N00014-18-1-2295).

\bibliographystyle{plainnat}
\bibliography{main.bib}


\section*{Proof of Theorems \ref{thm:main} and \ref{thm:main1}}
Before proving the theorems, it will be useful to prove a few helper lemmas that we use at various places.
\begin{lemma}
\label{lem:subset_eigvec}
Let $G$ and $G^{-}$ be two graphs with $V(G^{-}) = V(G)$, $E(G^{-}) \subseteq E(G)$. Let $L$ and $L^{-}$ be their respective Laplacians. Then $v^TL^{-}v \leq v^T L v$ for all vectors $v$.
\begin{proof}
The proof follows from the definition of Laplacian:
\begin{gather}
\begin{split}
v^T L v  &= \sum_{\{i,j\} \in E(G)} (v_i - v_j)^2\\
&= \sum_{\{i,j\} \in E(G^-)} (v_i - v_j)^2 + \sum_{\{i,j\} \in E(G)\setminus E(G^-)} (v_i - v_j)^2\\
&\geq \sum_{\{i,j\} \in E(G^-)} (v_i - v_j)^2\\
&= v^T L^{-} v
\end{split}
\end{gather}
\end{proof}
\end{lemma}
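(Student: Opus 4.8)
The plan is to reduce everything to the quadratic-form representation of the Laplacian recalled in the spectral graph theory background, namely that for a graph $H$ with nonnegative edge weights $a_{ij}$ one has $v^{\transpose} L_H v = \sum_{\{i,j\}\in E(H)} a_{ij}(v_i-v_j)^2$. Granting this, the lemma is immediate: writing the sum for $G$ as the sum over $E(G^{-})$ plus the sum over the remaining edges $E(G)\setminus E(G^{-})$, and observing that every term $a_{ij}(v_i-v_j)^2$ is nonnegative, dropping the extra edges can only decrease the value, which is exactly the claimed inequality $v^{\transpose} L^{-} v \le v^{\transpose} L v$.

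Concretely, I would carry out the following steps. First, establish (or simply cite from the background section) the quadratic-form identity for both $L$ and $L^{-}$; this is a one-line computation expanding $v^{\transpose}(D-A)v$ and regrouping the cross terms by edges. Second, use the hypothesis $E(G^{-})\subseteq E(G)$ (with matching weights on the shared edges) to split $v^{\transpose} L v = v^{\transpose} L^{-} v + \sum_{\{i,j\}\in E(G)\setminus E(G^{-})} a_{ij}(v_i-v_j)^2$. Third, note that the residual sum is a sum of nonnegative terms, hence $\ge 0$, and conclude. An equivalent and perhaps cleaner phrasing: the matrix $L - L^{-}$ is itself the Laplacian of the graph on $V(G)$ whose edge set is $E(G)\setminus E(G^{-})$, and every nonnegatively weighted Laplacian is positive semidefinite, so $v^{\transpose}(L-L^{-})v \ge 0$ for all $v$.

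I do not expect any real obstacle here; the only point that needs a word of care is that the argument relies on nonnegativity of the edge weights, which holds in all of our constructions (whether edges are unweighted indicators $dist(x_i,x_j)\le T$, or weights of the form $\exp(-c\,\norm{x_i-x_j}^2)$), since that is what guarantees each dropped term is $\ge 0$. If one wished to be fully general and allow $G^{-}$ merely to have a subset of the edges with possibly smaller weights, one would instead assume $a^{-}_{ij}\le a_{ij}$ on shared edges, and the identical edge-by-edge comparison still yields the conclusion.
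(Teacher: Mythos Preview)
Your proposal is correct and follows essentially the same approach as the paper: both use the quadratic-form identity $v^{\transpose} L v = \sum_{\{i,j\}\in E(G)} (v_i-v_j)^2$, split the sum over $E(G^-)$ and $E(G)\setminus E(G^-)$, and drop the nonnegative residual. Your additional remark that $L-L^-$ is itself a Laplacian (hence PSD) is a nice alternative phrasing, and your note about nonnegative weights is a valid clarification, but neither changes the substance of the argument.
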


\begin{lemma}
\label{lem:subset_eigval}
Let $G$ and $G^{-}$ be two graphs with $V(G^{-}) = V(G)$, $E(G^{-}) \subseteq E(G)$. Let $L$ and $L^{-}$ be their respective Laplacians, with $k$th eigenvalue $\lambda_k$ and $\lambda_k^-$ respectively. Then $\lambda_k \geq \lambda_k^-$.
\begin{proof}
From the min-max interpretation of eigenvectors, we have
\begin{equation}
\lambda_{k} = \min_{S_k} \max_{v \in S_k} v^T L v
\end{equation}
where $S_k$ ranges over all $k$-dimensional subspaces. From lemma \ref{lem:subset_eigvec}, we get
\begin{equation}
v^T L^- v \leq v^T L V 
\end{equation}
for all $v \in S_k$, and for all k-dimensional subspaces $S_k$. This gives us
\begin{equation}
\max_{v \in S_k} v^T L^- v \leq \max_{v \in S_k} v^T L V 
\end{equation}
for all k-dimensional subspaces $S_k$. Since this is true for all $S_k$, we get
\begin{equation}
\min_{S_k} \max_{v \in S_k} v^T L^- v \leq \min_{S_k} \max_{v \in S_k} v^T L V 
\end{equation}
From the min-max interpretation, we get $\lambda_k \geq \lambda_k^-$
\end{proof}
\end{lemma}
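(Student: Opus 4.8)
The plan is to prove the eigenvalue monotonicity $\lambda_k \ge \lambda_k^-$ directly from the quadratic-form comparison already established in Lemma~\ref{lem:subset_eigvec}, combined with the Courant--Fischer min-max characterization of the eigenvalues of a real symmetric matrix. Since $L$ and $L^-$ are both Laplacians, hence symmetric, their $k$th smallest eigenvalues admit the variational description
\[
\lambda_k = \min_{\substack{S \subseteq \R^n \\ \dim S = k}} \ \max_{\substack{v \in S \\ v \neq 0}} \frac{v^\transpose L v}{v^\transpose v}, \qquad \lambda_k^- = \min_{\substack{S \subseteq \R^n \\ \dim S = k}} \ \max_{\substack{v \in S \\ v \neq 0}} \frac{v^\transpose L^- v}{v^\transpose v}.
\]

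First I would invoke Lemma~\ref{lem:subset_eigvec} to get $v^\transpose L^- v \le v^\transpose L v$ for every $v$; dividing by $v^\transpose v > 0$, the same inequality holds between the two Rayleigh quotients pointwise. Next, fix an arbitrary $k$-dimensional subspace $S$ and take the maximum over $v \in S \setminus \{0\}$ on both sides; since $\max$ is monotone, $\max_{v \in S} \frac{v^\transpose L^- v}{v^\transpose v} \le \max_{v \in S} \frac{v^\transpose L v}{v^\transpose v}$. Finally, this inequality holds for \emph{every} such $S$, so taking the minimum over all $k$-dimensional subspaces on both sides preserves it (a pointwise bound between two functions of $S$ passes to their infima), which yields $\lambda_k^- \le \lambda_k$, exactly the claim.

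I would also mention the equivalent matrix-monotonicity viewpoint as a remark: $L - L^- = \sum_{\{i,j\} \in E(G) \setminus E(G^-)} (e_i - e_j)(e_i - e_j)^\transpose \succeq 0$, so $L \succeq L^-$ in the Loewner order, and Weyl's monotonicity theorem then gives $\lambda_k(L) \ge \lambda_k(L^-)$ for all $k$ at once.

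There is no genuine obstacle here; the proof is short. The only points requiring care are (i) stating the Courant--Fischer characterization with the correct quantifier order --- minimum over $k$-dimensional subspaces of the maximum of the Rayleigh quotient over nonzero vectors in that subspace --- and (ii) observing that the pointwise inequality $v^\transpose L^- v \le v^\transpose L v$ survives both the inner $\max$ and the outer $\min$, which it does precisely because both operations are monotone.
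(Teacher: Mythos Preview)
Your proof is correct and follows essentially the same route as the paper: invoke Lemma~\ref{lem:subset_eigvec} for the pointwise quadratic-form inequality, then push it through the Courant--Fischer min-max characterization (inner $\max$, then outer $\min$) to conclude $\lambda_k^- \le \lambda_k$. The only cosmetic difference is that you phrase things via Rayleigh quotients and add the Loewner/Weyl remark, neither of which changes the argument.
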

For all the proofs, unless otherwise stated, assume the first eigenvector (corresponding to the smallest eigenvalue) to be the all-ones vector.

\thmmain*
\begin{proof}
For cleanliness of notation, let $G(X) = G$, $L(X) = L$, $\lambda_k(X) = \lambda_k, G^-(X) = G^-$, etc. Observe that
\begin{equation}
\lambda_{2}^+ \ge \lambda_{2}(X') = v_{2}(X')^\transpose L(X') v_{2}(X') \ge v_{2}(X')^\transpose L^- v_{2}(X')
\end{equation}
The first part of the inequality follows from lemma \ref{lem:subset_eigval}, and second part follows from lemma \ref{lem:subset_eigvec}.
Now write
\begin{equation}
v_{2}(X') = \alpha v_{2}^- + \beta v_\perp
\end{equation}
where $v_\perp$ is a unit vector perpendicular to $v_2^-$ and all-ones vector, and $\alpha$ and $\beta$ are scalars. Then
\begin{align}
\lambda_2^+ &\ge v_{2}(X')L^- v_{2}(X') 
\\&= \qty(\alpha v_{2}^- + \beta v_\perp)^\transpose L^- \qty(\alpha v_{2}^- + \beta v_\perp)
\\&= \alpha^2 v_2^{-\transpose}L^- v_2^- + \beta^2 v_\perp^\transpose L^- v_\perp^\transpose
\\&\ge \alpha^2 \lambda_2^- + \beta^2\lambda_{3}^-
\end{align}
using the property that the $v_2^-$ and $v_\perp$ are orthogonal, and $v_2^-$ is an eigenvector of $L^-$. By rearranging, and observing $\alpha^2 + \beta^2 = 1$ (as $v_2(X')$ is a unit vector), we get
\begin{align}
\beta^2 (\lambda_{3}^- - \lambda_{2}^-) &\le \lambda_2^+ - \alpha^2 \lambda_2^- - \beta^2 \lambda_{2}^- = \lambda_2^+ - \lambda_2^-
\\
\beta^2 &\le \frac{\lambda_2^+ - \lambda_2^-}{\lambda_{3}^- - \lambda_{2}^-}
\end{align}
As $\abs{\alpha} = \sqrt{1 - \beta^2}$, we get
\begin{gather}
\begin{split}
\text{min}(\norm{v_{2}(X') - v_2^-}_2^2, \norm{-v_{2}(X') - v_2^-}_2^2 )&= \text{min}(2(1 - v_2(X')^T v_2^-), 2(1 + v_2(X')^T v_2^-))\\
&= 2(1 - \abs{\alpha})\\
&= 2\qty(1 - \sqrt{1 - \beta^2})\\
&\le 2\beta^2.
\end{split}
\end{gather}
This gives us
\begin{equation}
\text{min}(\norm{v_{2}(X') - v_2^-}_2, \norm{-v_{2}(X') - v_2^-}_2 )
\le \sqrt{2} \abs{\beta} \leq \sqrt{2} \ \sqrt{\frac{\lambda_2^+ - \lambda_2^-}{\lambda_{3}^- - \lambda_{2}^-}}
\end{equation}
Notice that the above argument holds for any $X'$ such that $dist(X, X')\leq \eps$; in particular, it holds for $X' = X$. This gives us 
\begin{equation}
\text{min}(\norm{v_{2}(X) - v_2^-}_2, \norm{-v_{2}(X) - v_2^-}_2 )
 \leq \sqrt{2} \ \sqrt{\frac{\lambda_2^+ - \lambda_2^-}{\lambda_{3}^- - \lambda_{2}^-}}.
\end{equation}
Using triangle inequality, we get 

\begin{equation}
\text{min}(\norm{v_{2}(X) - v_2(X')}_2, \norm{v_{2}(X) - (-v_2(X'))}_2 )
\le 2\sqrt{2} \ \sqrt{\frac{\lambda_2^+ - \lambda_2^-}{\lambda_{3}^- - \lambda_{2}^-}}.
\end{equation}
This finishes the proof as $F(X) = v_2(X)$ and $F(X')= v_2(X')$.
\end{proof}

\thmmainmulti*
\begin{proof}
This proof generalizes the proof of theorem \ref{thm:main}. For cleanliness of notation, let $G(X) = G$, $L(X) = L$, $\lambda_k(X) = \lambda_k, G^-(X) = G^-$, etc. 

Let $v(X')$ be any unit vector in $S_k(X') = Span(v_2(X'), \dots, v_{k+1}(X'))$. We will prove a bound on distance of $v(X')$ from its closest unit vector in $S_k^- = Span(v_2^-, \dots, v_{k+1}^-)$. Write 
\begin{equation}
v(X') = \sum_{i=2}^{k+1} \alpha_i v_{i}^- + \beta v_\perp
\end{equation}
where $v_\perp$ is a unit vector satisfying $v_\perp \perp v_i^-(x)$ for all $1 \le i \le k+1$, and $\alpha_i$ and $\beta$ are scalars.

By lemma $\ref{lem:subset_eigval}$, we get
\begin{equation}\label{eqn:triangleineq}
\lambda_{k+1}^+ \ge \lambda_{k+1}(X') = v_{k+1}(X')^\transpose L(X') v_{k+1}(X') 
\end{equation}
and by lemma $\ref{lem:subset_eigvec}$ and by the definition of eigenvectors, we get
\begin{equation}\label{eqn:triangleineq1}
v_{k+1}(X')^\transpose L(X') v_{k+1}(X') \ge v(X')^\transpose L(X') v(X') \ge v(X')^\transpose L^- v(X')
\end{equation}
The first part of the inequality follows from lemma \ref{lem:subset_eigval}, and second part follows from lemma \ref{lem:subset_eigvec}.

Combining the two inequalities, we get
\begin{align}
\lambda_{k+1}^+  &\ge v(X')^\transpose L^- v(X')
\\&= \qty(\sum_{i=2}^{k+1} \alpha_i v_{i}^- + \beta v_\perp)^\transpose L^- \qty(\sum_{i=2}^{k+1} \alpha_i v_{i}^- + \beta v_\perp)
\\&= \sum_{i=2}^{k+1} \alpha_i^2 v_i^{-\transpose}L^- v_i^- + \beta^2 v_\perp^\transpose L^- v_\perp^\transpose
\\&\ge \sum_{i=2}^{k+1} \alpha_i^2 \lambda_i^- + \beta^2\lambda_{k+2}^-
\end{align}
using the property that the $v_i^-$ and $v_\perp$ are all mutually orthogonal. Rearranging:
\begin{align}
\beta^2 (\lambda_{k+2}^- - \lambda_{2}^-) &\le \lambda_{k+1}^+ - \sum_{i=2}^{k+1} \alpha_i^2 \lambda_i^- - \beta^2 \lambda_{2}^-
\\&\le \lambda_{k+1}^+ - \lambda_2^-
\end{align}
which implies
\begin{equation}
\beta^2 \le \frac{\lambda_{k+1}^+ - \lambda_2^-}{\lambda_{k+2}^- - \lambda_{2}^-}
\end{equation}
For simplicity of notation, let 
\begin{equation}
    \alpha = (\alpha_2, \dots, \alpha_{k+1}) \in \R^k,\qq{} v^- = \frac{\sum_{i=2}^{k+1} \alpha_i v_{i}^-}{\norm{\alpha}}
\end{equation}
Then 
\begin{align}\label{eq:enough_for_thm1}
\norm{v(X') - v^-}_2^2 &= 2\qty(1 - v(X')^T v^-)
= 2(1 - \norm{\alpha})
= 2\qty(1 - \sqrt{1 - \beta^2})
\le 2\beta^2 \le 2 {\frac{\lambda_{k+1}^+ - \lambda_2^-}{\lambda_{k+2}^- - \lambda_{2}^-}}
\end{align}

This shows that every unit vector in $S_k(X')$ is within $\sqrt{2} \ \sqrt{\frac{\lambda_{k+1}^+ - \lambda_2^-}{\lambda_{k+2}^- - \lambda_{2}^-}}$ of some unit vector in $S_k^-$, and, by symmetry, vice-versa.
Notice that the above argument holds for any $X'$ such that $dist(X, X')\leq \eps$; in particular, it holds for $X' = X$. It thus follows from triangle inequality that every unit vector in $S_k(X')$ must be within $2 \sqrt{2} \ \sqrt{\frac{\lambda_{k+1}^+ - \lambda_2^-}{\lambda_{k+2}^- - \lambda_{2}^-}}$ of some unit vector $S_k(X)$.

Let $F(X) = (F^1_X, F^2_X, \dots, F^k_X)$ be an $n \times k$ matrix. Define $F(X')$ similarly. Observe that 
\begin{equation}
\sqrt{\sum_{i \in [n]} \norm{M F_X(x_i) -  F_{X'}(x_i')}^2} = \norm{F(X) M^T - F(X')} 
\end{equation}
Now, to prove the theorem we need to show the existence of an invertible matrix $M$ such that 
\begin{equation}
\norm{F(X) M^T - F(X')} \leq 2 \sqrt{2k} \ \sqrt{\frac{\lambda_{k+1}^+ - \lambda_2^-}{\lambda_{k+2}^- - \lambda_{2}^-}}
\end{equation}

When $2 \sqrt{2} \ \sqrt{\frac{\lambda_{k+1}^+ - \lambda_2^-}{\lambda_{k+2}^- - \lambda_{2}^-}} \geq \sqrt{2}$, the desired bound is trivially true. To see this, set $M$ to be a diagonal matrix with diagonal entries $\pm 1$ such that $\langle M_{ii}F^i_X, F^i_{X'} \rangle \geq 0$ for all $i$. Since $F^i_X$ and $F^i_{X'}$ are unit vectors, we get
\begin{gather}
\begin{split}
\norm{F(X) M^T - F(X')} &= \sqrt{\sum_{i \in [k]} \norm{M_{ii}F^i_X - F^i_{X'}}^2}\\
&\leq \sqrt{2k}\\
&\leq 2 \sqrt{2k} \ \sqrt{\frac{\lambda_{k+1}^+ - \lambda_2^-}{\lambda_{k+2}^- - \lambda_{2}^-}}
\end{split}
\end{gather}
where $\norm{\cdot}$ denotes the Frobenius norm for matrices, and $\ell_2$ norm for vectors.

So now assume $2 \sqrt{2} \ \sqrt{\frac{\lambda_{k+1}^+ - \lambda_2^-}{\lambda_{k+2}^- - \lambda_{2}^-}} < \sqrt{2}$. Let $P$ be a projection map onto subspace $S_k(X)$. As $F^1_{X'}, \dots, F^k_{X'}$ form an orthonormal basis for $S_k(X')$, projecting them onto $S_k(X)$ must yield a basis for $S_k(X)$; if not, then we would have $Pu = 0$ for some unit vector $u \in S_k(X')$, so that $u$ would be orthogonal (i.e. have distance $\sqrt{2}$) to every unit vector in $S_k(X)$, which contradicts the assumption that $2 \sqrt{2} \ \sqrt{\frac{\lambda_{k+1}^+ - \lambda_2^-}{\lambda_{k+2}^- - \lambda_{2}^-}} < \sqrt{2}$. Now let $M \in \R^{k \times k}$ be an invertible matrix; such that $M^T$ corresponds to the change of basis matrix satisfying $PF(X') = F(X)M^T$ (this must exist since $PF(X')$ and $F(X)$ are both bases of $S_k(X)$).Then 
$$\norm{F(X) M^T - F(X')} = \norm{PF(X') - F(X')}  \le 2 \sqrt{2k} \ \sqrt{\frac{\lambda_{k+1}^+ - \lambda_2^-}{\lambda_{k+2}^- - \lambda_{2}^-}}$$ 
 where last inequality follows since for each column vector of $F(X')$, its projection onto $S_k(X)$  has $\ell_2$ distance of at most $2\sqrt{2} \ \sqrt{\frac{\lambda_{k+1}^+ - \lambda_2^-}{\lambda_{k+2}^- - \lambda_{2}^-}}$ from it. This is because for each unit vector in $S_k(X')$, there is a vector in $S_k(X)$ at a distance of at most $2\sqrt{2} \ \sqrt{\frac{\lambda_{k+1}^+ - \lambda_2^-}{\lambda_{k+2}^- - \lambda_{2}^-}}$ from it. This concludes the proof.
\end{proof}
\section*{Proof of Theorem \ref{thm:ptwise}}

\thmptwise*

We give the proof of this theorem as a series of lemmas. First, similar to theorem 1, we show the robustness of function $f$ up to sign.
\begin{lemma}\label{lem:f_rob_sign_flip}
Given two points $x$ and $x'$, such that $dist(x, x') \le \eps$, the function $f: \R^d \rightarrow \R$ defined in the main text of the paper satisfies
\begin{equation}\label{eq:f_rob_sign_flip}
\min(\abs{f(x) - f(x')},\abs{f(x) - (-f(x'))}) \le \qty(2\sqrt{2})\sqrt{ \frac{\lambda_2^+(x) - \lambda_2^-(x)}{\lambda_3^-(x) - \lambda_2^-(x)}
}.
\end{equation}
\end{lemma}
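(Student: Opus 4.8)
The plan is to adapt the proof of Theorem~\ref{thm:main} almost verbatim; the only genuinely new observation is that moving the single point $x$ to $x'$ with $dist(x,x')\le\eps$ affects only the edges of the neighborhood graph incident to the moved node, and that by the triangle inequality those edges are sandwiched between the edge sets of $G^-(x)$ and $G^+(x)$.

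First I would establish the sandwich. Work on the common vertex set obtained by identifying the moved node of $G(x)$ with that of $G(x')$. If $(x_0,x_j)\in E(G^-(x))$, then $dist(x,x_j)\le T-\eps$, so $dist(x',x_j)\le dist(x',x)+dist(x,x_j)\le T$, i.e. the corresponding edge lies in $G(x')$; and all edges among $x_1,\dots,x_n$ are identical in the three graphs. Conversely, if an edge of $G(x')$ is incident to $x'$, then $dist(x',x_j)\le T$, hence $dist(x,x_j)\le T+\eps$, so that edge lies in $G^+(x)$. Thus $E(G^-(x))\subseteq E(G(x'))\subseteq E(G^+(x))$, and trivially $E(G^-(x))\subseteq E(G(x))\subseteq E(G^+(x))$ as well (since $T-\eps\le T\le T+\eps$).

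Next, writing $L^-=L^-(x)$, $\lambda_k^\pm=\lambda_k^\pm(x)$, $v_k^-=v_k^-(x)$, apply Lemmas~\ref{lem:subset_eigval} and~\ref{lem:subset_eigvec} to these inclusions to get $\lambda_2^+\ge\lambda_2(x')=v_2(x')^\transpose L(x')v_2(x')\ge v_2(x')^\transpose L^- v_2(x')$. Decomposing $v_2(x')=\alpha v_2^-+\beta v_\perp$ with $v_\perp$ a unit vector orthogonal to both $v_2^-$ and the all-ones vector, and using that $v_2^-$ is an eigenvector of $L^-$ while $v_\perp^\transpose L^- v_\perp\ge\lambda_3^-$, one gets $\lambda_2^+\ge\alpha^2\lambda_2^-+\beta^2\lambda_3^-$; with $\alpha^2+\beta^2=1$ this rearranges to $\beta^2\le(\lambda_2^+-\lambda_2^-)/(\lambda_3^--\lambda_2^-)$. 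Exactly as in Theorem~\ref{thm:main}, this yields $\min(\norm{v_2(x')-v_2^-},\norm{-v_2(x')-v_2^-})\le\sqrt{2}\,\abs{\beta}$; running the same argument with $x$ in place of $x'$ (sandwiched for free) and applying the triangle inequality gives $\min(\norm{v_2(x)-v_2(x')},\norm{v_2(x)+v_2(x')})\le 2\sqrt{2}\sqrt{(\lambda_2^+-\lambda_2^-)/(\lambda_3^--\lambda_2^-)}$. Finally, since $f(x)=v_2(x)_0$ and $f(x')=v_2(x')_0$, and a single coordinate is bounded in absolute value by the $\ell_2$ norm of the vector, $\min(\abs{f(x)-f(x')},\abs{f(x)+f(x')})$ is at most the above, which is the claim.

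The argument is essentially routine given Theorem~\ref{thm:main}; the only point requiring care — and the only \emph{obstacle} worth flagging — is the bookkeeping of the vertex identification between $G(x)$ and $G(x')$, so that $v_2(x)$, $v_2(x')$, $v_2^-$, and the all-ones vector all live in the same $\R^{n+1}$, together with the asymmetric threshold shift $T\mapsto T\pm\eps$ applied only to the moved node (this is what makes the $\eps$, rather than $2\eps$, sandwich go through here, in contrast to Theorem~\ref{thm:main} where every point moves). Everything downstream — eigenvalue monotonicity, the bound on $\beta^2$, the sign-flip handling, and the projection onto the $0$th coordinate — is identical to the one-shot computation already done for Theorem~\ref{thm:main}.
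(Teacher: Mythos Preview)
Your proposal is correct and follows essentially the same route as the paper's proof, which explicitly states that ``the proof follows from the same argument as Theorems~\ref{thm:main} and~\ref{thm:main1}'' and then reproduces the eigenvector decomposition, the bound on $\beta^2$, the triangle-inequality step, and the coordinate projection verbatim. Your explicit verification of the edge-set sandwich $E(G^-(x))\subseteq E(G(x'))\subseteq E(G^+(x))$ via the asymmetric $T\pm\eps$ shift on the moved node is a detail the paper leaves implicit in this lemma's proof, but it is exactly the right justification.
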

\begin{proof}
The proof follows from the same argument as Theorems \ref{thm:main} and \ref{thm:main1} above. For completeness, we include it again here.

For cleanliness of notation, let $G(x) = G$, $L(x) = L$, $\lambda_k(x) = \lambda_k, G^-(x) = G^-$, etc. Observe that
\begin{equation}
\lambda_{2}^+ \ge \lambda_{2}(x') = v_{2}(x')^\transpose L(x') v_{2}(x') \ge v_{2}(x')^\transpose L^- v_{2}(x')
\end{equation}
The first part of the inequality follows from lemma \ref{lem:subset_eigval}, and second part follows from lemma \ref{lem:subset_eigvec}.
Now write
\begin{equation}
v_{2}(x') = \alpha v_{2}^- + \beta v_\perp
\end{equation}
where $v_\perp$ is a unit vector perpendicular to $v_2^-$ and all-ones vector, and $\alpha$ and $\beta$ are scalars. Then
\begin{align}
\lambda_2^+ &\ge v_{2}(x')L^- v_{2}(x') 
\\&= \qty(\alpha v_{2}^- + \beta v_\perp)^\transpose L^- \qty(\alpha v_{2}^- + \beta v_\perp)
\\&= \alpha^2 v_2^{-\transpose}L^- v_2^- + \beta^2 v_\perp^\transpose L^- v_\perp^\transpose
\\&\ge \alpha^2 \lambda_2^- + \beta^2\lambda_{3}^-
\end{align}
using the property that the $v_2^-$ and $v_\perp$ are orthogonal, and $v_2^-$ is an eigenvector of $L^-$. By rearranging, and observing $\alpha^2 + \beta^2 = 1$ (as $v_2(x')$ is a unit vector), we get
\begin{align}
\beta^2 (\lambda_{3}^- - \lambda_{2}^-) &\le \lambda_2^+ - \alpha^2 \lambda_2^- - \beta^2 \lambda_{2}^- = \lambda_2^+ - \lambda_2^-
\\
\beta^2 &\le \frac{\lambda_2^+ - \lambda_2^-}{\lambda_{3}^- - \lambda_{2}^-}
\end{align}
As $\abs{\alpha} = \sqrt{1 - \beta^2}$, we get
\begin{gather}
\begin{split}
\text{min}(\norm{v_{2}(x') - v_2^-}_2^2, \norm{-v_{2}(x') - v_2^-}_2^2 )&= \text{min}(2(1 - v_2(x')^T v_2^-), 2(1 + v_2(x')^T v_2^-))\\
&= 2(1 - \abs{\alpha})\\
&= 2\qty(1 - \sqrt{1 - \beta^2})\\
&\le 2\beta^2.
\end{split}
\end{gather}
This gives us
\begin{equation}
\text{min}(\norm{v_{2}(x') - v_2^-}_2, \norm{-v_{2}(x') - v_2^-}_2 )
\le \sqrt{2} \abs{\beta} \leq \sqrt{2} \ \sqrt{\frac{\lambda_2^+ - \lambda_2^-}{\lambda_{3}^- - \lambda_{2}^-}}
\end{equation}
Notice that the above argument holds for any $x'$ such that $dist(x, x')\leq \eps$; in particular, it holds for $x' = x$. This gives us 
\begin{equation}
\text{min}(\norm{v_{2}(x) - v_2^-}_2, \norm{-v_{2}(x) - v_2^-}_2 )
 \leq \sqrt{2} \ \sqrt{\frac{\lambda_2^+ - \lambda_2^-}{\lambda_{3}^- - \lambda_{2}^-}}.
\end{equation}
Using triangle inequality, we get 

\begin{equation}
\text{min}(\norm{v_{2}(x) - v_2(x')}_2, \norm{v_{2}(x) - (-v_2(x'))}_2 )
\le 2\sqrt{2} \ \sqrt{\frac{\lambda_2^+ - \lambda_2^-}{\lambda_{3}^- - \lambda_{2}^-}}.
\end{equation}
Thus, we conclude 
\begin{align}
&\min\qty(\abs{f(x) - f(x')},\abs{f(x) - (-f(x'))}) 
\\&\le \min\qty(\norm{v_2(x) - v_2(x')}, \norm{v_2(x) - (-v_2(x'))}) 
\\&\le 2\sqrt{2} \ \sqrt{\frac{\lambda_2^+ - \lambda_2^-}{\lambda_{3}^- - \lambda_{2}^-}}
\end{align}
as desired.
\end{proof}

While flipping signs for the whole dataset is fine, we don't want the features of some of the points to flip signs arbitrarily. As described in the main text, to resolve this, we select the eigenvector $v_2(x)$ to be the eigenvector (with eigenvalue $\lambda_2(x)$) whose last $|X|$ entries have the maximum inner product with $v_2(X)$. We show a bound on this inner product next. Let $v_2^*(x)$ be a $|X|$ dimensional vector obtained by chopping off the first entry of  $v_2(x)$. 
\begin{lemma} \label{lem:innerprod_train}
For $v_2^*(x)$ defined as above, we have
\begin{equation}\label{eq:f_train_dist}
\ev{ v_2^*(x), v_2(X) } \geq \sqrt{1 - \frac{\lambda_2(x) - \lambda_2(X)\qty(1 - v_2(x)_0^2\frac{n+1}{n})}{\lambda_3(X) - \lambda_2(X)} - v_2(x)_0^2 \frac{n+1}{n}}
\end{equation}
\end{lemma}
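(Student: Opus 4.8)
The plan is to exploit the fact that $G(X)$ is precisely the subgraph of $G(x)$ induced on the original points $\{x_1,\dots,x_n\}$: since the edge threshold $T$ is identical in both graphs, the edges among vertices $1,\dots,n$ coincide. Write $w = v_2(x)\in\R^{n+1}$ for the chosen second eigenvector of $L(x)$, let $c = v_2(x)_0$ be its first coordinate, and let $u = v_2^*(x)\in\R^n$ be $w$ with that first coordinate deleted. Dropping the edges incident to vertex $0$ can only decrease the Laplacian quadratic form, so in the spirit of Lemma~\ref{lem:subset_eigvec},
\[
\lambda_2(x) = w^\transpose L(x)\,w = \sum_{\{i,j\}\in E(G(x))}(w_i-w_j)^2 \ \ge\ \sum_{\{i,j\}\in E(G(X))}(w_i-w_j)^2 = u^\transpose L(X)\,u .
\]
This reduces the problem to understanding how the restricted vector $u$ sits relative to the eigenbasis of $L(X)$.

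Next I would expand $u = \sum_{k=1}^n a_k v_k(X)$ in the orthonormal eigenbasis of $L(X)$, where $a_2 = \langle u, v_2(X)\rangle$ is the quantity of interest, and is nonnegative by the sign (more precisely, eigenspace) selection convention for $v_2(x)$. Two bookkeeping facts pin down the coefficients. First, $w\perp \mathbf{1}_{n+1}$ forces $\sum_i u_i = -c$, so the component of $u$ along $v_1(X)=\mathbf{1}_n/\sqrt n$ is $a_1 = -c/\sqrt n$, contributing $a_1^2 = c^2/n$. Second, $\|u\|^2 = \|w\|^2 - c^2 = 1 - c^2$. Hence $\sum_{k\ge 2}a_k^2 = 1 - c^2\tfrac{n+1}{n}$; note this is automatically $\ge 0$ since $a_1^2 \le \|u\|^2$ by Bessel's inequality, which is also why $c^2\le \tfrac{n}{n+1}$.

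Then I would lower-bound the quadratic form using the ordering of eigenvalues ($\lambda_1(X)=0$ and $\lambda_k(X)\ge\lambda_3(X)$ for $k\ge3$):
\[
u^\transpose L(X)\,u = \sum_{k\ge 2} a_k^2\,\lambda_k(X) \ \ge\ a_2^2\,\lambda_2(X) + \lambda_3(X)\Big(1 - c^2\tfrac{n+1}{n} - a_2^2\Big).
\]
Combining this with the first step and rearranging isolates $a_2^2$:
\[
a_2^2\,\big(\lambda_3(X)-\lambda_2(X)\big) \ \ge\ \lambda_3(X)\Big(1-c^2\tfrac{n+1}{n}\Big) - \lambda_2(x),
\]
and dividing by the (positive) eigengap $\lambda_3(X)-\lambda_2(X)$ and regrouping terms gives exactly
\[
a_2^2 \ \ge\ 1 - c^2\tfrac{n+1}{n} - \frac{\lambda_2(x) - \lambda_2(X)\big(1-c^2\tfrac{n+1}{n}\big)}{\lambda_3(X)-\lambda_2(X)} ;
\]
taking square roots (legitimate since $a_2\ge 0$) yields \eqref{eq:f_train_dist}.

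I do not expect a serious obstacle: this is essentially the same variational/eigengap argument used in Theorems~\ref{thm:main} and~\ref{thm:main1}. The only points needing care are (i) correctly tracking the $\tfrac{n+1}{n}$ factor, which appears solely because restricting $w$ to its last $n$ coordinates produces a vector with a small but nonzero component along $\mathbf{1}_n$ (of squared norm $c^2/n$), and (ii) invoking the selection rule so that $a_2 = \langle v_2^*(x), v_2(X)\rangle \ge 0$ before extracting the square root. As usual, the resulting bound is only meaningful when $\lambda_3(X) > \lambda_2(X)$, which is exactly the regime in which Theorem~\ref{thm:ptwise} is stated.
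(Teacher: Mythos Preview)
Your proposal is correct and follows essentially the same approach as the paper's proof: decompose $v_2^*(x)$ into its components along $\mathbf 1_n$, along $v_2(X)$, and along the orthogonal complement, use the subgraph inequality $\lambda_2(x)\ge v_2^*(x)^\transpose L(X)\, v_2^*(x)$, and bound the quadratic form via the eigenvalue ordering to control the $v_2(X)$-component. The only cosmetic difference is that the paper bundles all of $\sum_{k\ge 3}a_k v_k(X)$ into a single unit vector $w$ with coefficient $\beta$, whereas you keep the full eigenbasis expansion; the algebra and the resulting bound are identical.
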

\begin{proof}
For cleanliness of notation, let $G(x) = G, L(x) = L, \lambda_k(x) = \lambda_k, v_2^*(x) = v_2^*$, etc.
Write $v_2^* = \alpha v_2(X) + \beta w + \gamma \mathbf 1 / \sqrt{n}$ for scalars $\alpha, \beta, \gamma$ and vector $w$ orthogonal to $v_2(X)$ and $\mathbf 1$. 
Taking inner product of both sides with $\mathbf 1$, we get 
\begin{equation}
\langle v_2^*, \mathbf 1 \rangle =  \alpha \langle v_2(X) , \mathbf 1 \rangle + \beta \langle w , \mathbf 1 \rangle + \gamma  \langle \mathbf 1 / \sqrt{n} , \mathbf 1 \rangle
\end{equation}
As $\langle v_2(X) , \mathbf 1 \rangle =0$, $\langle w , \mathbf 1 \rangle =0$, and $\langle v_2^*, \mathbf 1 \rangle = -v_2(x)_0$, this gives us $\gamma = -v_2(x)_0 / \sqrt{n}$.

Now, as $w$ is orthogonal to the bottom two eigenvectors of $L(X)$, we get $w^T L(X) w \ge \lambda_3(X)$. Then
\begin{align}
 \lambda_2 = v_2^T L v_2 &\ge v_2^{*T} L(X) v_2^* 
\\&\ge \alpha^2 \lambda_2(X) + \beta^2 \lambda_3(X)
\\&= \qty(1 - \gamma^2 - v_2(x)_0^2) \lambda_2(X) + \beta^2 \qty( \lambda_3(X) - \lambda_2(X))
\end{align}
 Putting $\gamma = -v_2(x)_0 / \sqrt{n}$. and rearranging, we get
\begin{align}\label{eq:lem2_beta_bound}
\beta^2 \le \frac{\lambda_2 - \lambda_2(X) \qty(1 - v_2(x)_0^2 \frac{n+1}{n}) }{\lambda_3(X) - \lambda_2(X)}
\end{align}
Notice now that $\alpha^2 + \beta^2 + \gamma^2 = \norm{v_2^*}^2 = 1 - v_2(x)_0^2$. Since we know $\gamma^2 + v_2(x)_0^2 = v_2(x)_0^2\frac{n+1}{n}$, and we know $\alpha \ge 0$ by definition, we conclude
$$
\ev{v_2^*(x), v_2(X)} = \alpha = \sqrt{1 - \beta^2 - v_2(x)_0^2\frac{n+1}{n}}
$$
whereupon substituting the bound on $\beta^2$ from Eqn. \eqref{eq:lem2_beta_bound} gives the desired result.
\end{proof}
Next, we give robustness bound on $f$ when $\langle v_2^*(x), v_2(X) \rangle > \frac{1}{\sqrt{2}}$
\begin{lemma}\label{lem:f_rob}
Given two points $x$ and $x'$, such that $dist(x, x') \le \eps$, if $\langle v_2^*(x), v_2(X) \rangle > \frac{1}{\sqrt{2}}$ the function $f_X: \R^d \rightarrow \R$ defined in the main text of the paper satisfies
\begin{equation}
\abs{f_X(x) - f_X(x')} \le \qty(6\sqrt{2})\sqrt{ \frac{\lambda_2^+(x) - \lambda_2^-(x)}{\lambda_3^-(x) - \lambda_2^-(x)}
}.
\end{equation}
\end{lemma}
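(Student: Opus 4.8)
The plan is to bootstrap Lemma~\ref{lem:f_rob_sign_flip} (the ``up to sign'' bound) into an honest, sign-free bound by exploiting that the hypothesis $\ev{v_2^*(x), v_2(X)} > 1/\sqrt 2$, together with the sign convention built into $f_X$, pins down the orientations of both $v_2(x)$ and $v_2(x')$. Abbreviate $\eta := \sqrt{(\lambda_2^+(x) - \lambda_2^-(x))/(\lambda_3^-(x) - \lambda_2^-(x))}$, so we want $\abs{f_X(x) - f_X(x')} \le 6\sqrt 2\,\eta$. First I would re-run the estimate inside the proof of Lemma~\ref{lem:f_rob_sign_flip}, once for the pair $(x,x')$ and once for $(x,x)$, to produce signs $s, s' \in \{+1,-1\}$ with $\norm{s\,v_2(x) - v_2^-(x)} \le \sqrt 2\,\eta$ and $\norm{s'\,v_2(x') - v_2^-(x)} \le \sqrt 2\,\eta$; the triangle inequality then gives $\norm{s\,v_2(x) - s'\,v_2(x')} \le 2\sqrt 2\,\eta$.

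Next I would split on the magnitude of $\eta$. If $\eta \ge 1/(3\sqrt 2)$ the claim is immediate: $\abs{f_X(x) - f_X(x')} = \abs{v_2(x)_0 - v_2(x')_0} \le \abs{v_2(x)_0} + \abs{v_2(x')_0} \le 2 \le 6\sqrt 2\,\eta$, using that $v_2(x), v_2(x')$ are unit vectors. So assume $\eta < 1/(3\sqrt 2) < 1/4$ and argue $s = s'$. If instead $s' = -s$, then $\norm{v_2(x) + v_2(x')} = \norm{s\,v_2(x) - s'\,v_2(x')} \le 2\sqrt 2\,\eta$, and dropping the first coordinate only shrinks the norm, so $\norm{v_2^*(x) + v_2^*(x')} \le 2\sqrt 2\,\eta$. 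Pairing with the unit vector $v_2(X)$ and using Cauchy--Schwarz and the hypothesis,
\[
\ev{v_2^*(x'), v_2(X)} = \ev{v_2^*(x) + v_2^*(x'),\, v_2(X)} - \ev{v_2^*(x), v_2(X)} < 2\sqrt 2\,\eta - \tfrac{1}{\sqrt 2} < 0,
\]
where the last step uses $\eta < 1/4$. But $f_X$ selects $v_2(x')$ to be the eigenvector whose last $\abs{X}$ entries have the \emph{largest}, hence nonnegative, inner product with $v_2(X)$ --- a contradiction. Hence $s = s'$, so $\norm{v_2(x) - v_2(x')} = \norm{s\,v_2(x) - s'\,v_2(x')} \le 2\sqrt 2\,\eta$, and therefore $\abs{f_X(x) - f_X(x')} \le \norm{v_2(x) - v_2(x')} \le 2\sqrt 2\,\eta \le 6\sqrt 2\,\eta$.

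I expect the only delicate part to be the sign bookkeeping in the second paragraph: $v_2^-(x)$ is fixed with an arbitrary sign, $v_2(X)$ is fixed, while $v_2(x)$ and $v_2(x')$ are oriented by the ``maximal agreement of the reference block with $v_2(X)$'' rule, so one must verify that a mismatch $s' = -s$ would force $\ev{v_2^*(x'), v_2(X)}$ strictly negative, which contradicts that rule --- and this is exactly where the quantitative gap $\ev{v_2^*(x), v_2(X)} > 1/\sqrt 2$ is used. The constant $6\sqrt 2$, rather than the $2\sqrt 2$ that the small-$\eta$ regime actually gives, is merely the price of the case split: one needs the trivial bound $2$ to be at most $6\sqrt 2\,\eta$ past the crossover, and any threshold between $1/(3\sqrt 2)$ and $1/4$ makes both regimes come out below $6\sqrt 2\,\eta$.
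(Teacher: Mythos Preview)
Your proof is correct but follows a different route from the paper's. The paper splits on the sign of $\ev{v_2(x), v_2(x')}$: when it is nonnegative, $\norm{v_2(x)-v_2(x')}$ is already the smaller of the two distances and Lemma~\ref{lem:f_rob_sign_flip} finishes at once with $2\sqrt 2\,\eta$; when it is negative, the paper projects the reference direction ($v_2(X)$ with a zero prepended) onto the two-dimensional span of $v_2(x)$ and $v_2(x')$, and uses planar geometry together with the sign-selection rule to show $\ev{v_2(x),v_2(x')}\ge -1/\sqrt 2$, whence $\norm{v_2(x)-v_2(x')}\le 1.85$ while $\norm{v_2(x)+v_2(x')}\ge 0.75$, so the ratio is at most $3$ and Lemma~\ref{lem:f_rob_sign_flip} gives $6\sqrt 2\,\eta$. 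You instead threshold on $\eta$: above $1/(3\sqrt 2)$ the trivial bound $2$ already suffices, and below it you force $s=s'$ by a single Cauchy--Schwarz contradiction with the orientation rule $\ev{v_2^*(x'),v_2(X)}\ge 0$. Your argument is more elementary --- no 2D projection, no ad-hoc constants $1.85$ and $0.75$ --- and in the small-$\eta$ regime it actually yields the sharper $2\sqrt 2\,\eta$; the paper's approach, on the other hand, avoids any case split on $\eta$ and gives a uniform ratio bound between the two candidate distances.
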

\begin{proof}
If $\ev{ v_2(x), v_2(x')} \geq 0$, then from the proof of Lemma \ref{lem:f_rob_sign_flip}, we get
\begin{align}
\abs{f_X(x) - f_X(x')}
&\le \norm{v_2(x) - v_2(x')} \\
&= \min\qty(\norm{v_2(x) - v_2(x')}, \norm{v_2(x) - (-v_2(x'))}) 
\\&\le 2\sqrt{2} \ \sqrt{\frac{\lambda_2^+ - \lambda_2^-}{\lambda_{3}^- - \lambda_{2}^-}}
\end{align}

and we are done.

Otherwise, let $v$ be the vector $v_2(X)$ with a zero prepended to it. Note that $\langle v_2(x), v \rangle = \langle v_2^*(x), v_2(X) \rangle > \frac{1}{\sqrt{2}}$.  Also, let $w$ be a unit vector in the direction of  projection of $v$ on the subspace spanned by $v_2(x)$ and $v_2(x')$. And $w_\perp$ be a vector orthogonal to $w$ such that $v = \alpha w + \beta  w_\perp$. This gives us $\langle v_2(x'), w \rangle = \frac{1}{\alpha} \langle v_2(x'), v \rangle \geq \langle v_2(x'), v \rangle \geq 0$ as $0< \alpha < 1$. Similarly, $\langle v_2(x), w \rangle > \frac{1}{\sqrt{2}}$.

For three unit vectors $w, v_2(x), v_2(x')$ lying in a two dimensional subspace such that $\langle v_2(x), w \rangle > \frac{1}{\sqrt{2}}$ and $\langle v_2(x'), w \rangle \geq 0$, if $\langle v_2(x), v_2(x') \rangle < 0$, we get $\langle v_2(x), v_2(x') \rangle \geq \frac{-1}{\sqrt{2}}$. This implies $\langle v_2(x), -v_2(x') \rangle \leq \frac{1}{\sqrt{2}}$. From these inner product values, we get $\norm{v_2(x) - v_2(x')} \leq 1.85$, and $\norm{v_2(x) - (-v_2(x'))} \geq 0.75$. 

This gives us $$\norm{v_2(x)  -v_2(x')} \leq 3 \min(\norm{v_2(x) - (-v_2(x'))}, \norm{v_2(x) - v_2(x')}) \le 3 \cdot 2\sqrt{2} \ \sqrt{\frac{\lambda_2^+ - \lambda_2^-}{\lambda_{3}^- - \lambda_{2}^-}},$$ where the second inequality now follows from the proof of Lemma \ref{lem:f_rob_sign_flip}.

As $|f_X(x) - f_X(x')| \leq \norm{v_2(x)  -v_2(x')} $, this finishes the proof.
\end{proof}
Next, we show that under the conditions mentioned in our theorem, $\langle v_2^*(x), v_2(X) \rangle \geq \frac{1}{\sqrt{2}}$, for most $x \sim D$.
\begin{lemma}\label{lem:inner_product_big}
For a sufficiently large training set size $n$, if $\E_{X \sim D} [\frac{1}{\lambda_3(X) - \lambda_2(X)}] \leq c$ for some small enough constant $c$, then with probability $0.95$ over the choice of $X$,
\begin{equation}\label{eq:inner_product_big}
\Pr_{x \sim D}\qty[\langle v_2^*(x), v_2(X) \rangle \geq \frac{1}{\sqrt{2}}] \geq 0.95
\end{equation}
\end{lemma}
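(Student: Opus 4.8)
The plan is to start from the lower bound of Lemma~\ref{lem:innerprod_train} on $\langle v_2^*(x), v_2(X)\rangle$ and show that, under the hypothesis $\E_X\bigl[1/(\lambda_3(X)-\lambda_2(X))\bigr]\le c$, the quantity subtracted from $1$ inside the square root there is at most $\tfrac12$ for most $(X,x)$. Writing $g := \lambda_3(X)-\lambda_2(X)$ and $\mu := v_2(x)_0^2\,\tfrac{n+1}{n}$ and expanding the numerator, this amounts to showing
\begin{equation}
E(X,x)\ :=\ \frac{\lambda_2(x)-\lambda_2(X)}{g}\ +\ \Bigl(\frac{\lambda_2(X)}{g}+1\Bigr)\mu\ \le\ \frac12
\end{equation}
with probability $\ge 0.95$ over $X$ and $\ge 0.95$ over $x\sim D$. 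I would control the two summands of $E$ separately and finish with Markov's inequality.

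For the first summand I would reuse the test-vector trick from Theorem~\ref{thm:main}: prepending a $0$ to $v_2(X)$ produces a unit vector on $V(G(x))$ orthogonal to the all-ones vector, and since the only edges of $G(x)$ not already in $G(X)$ are those incident to the new vertex $x_0$, evaluating the Laplacian quadratic form on this vector gives $\lambda_2(x)\le\lambda_2(X)+\sum_{j:\,x_j\sim x_0} v_2(X)_j^2\le\lambda_2(X)+1$. Hence $\frac{\lambda_2(x)-\lambda_2(X)}{g}\le \frac1g$ (and if the left side is negative, so much the better), so $E(X,x)\le \bar E(X,x):=\frac1g+\bigl(\tfrac{\lambda_2(X)}{g}+1\bigr)\mu$, which is nonnegative, and from here on I would bound $\bar E$. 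Applying Markov to the hypothesis, $\frac1g\le 40c$ with probability $\ge 0.975$ over $X$.

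The key point is the second summand, and here I would use exchangeability of the sample. Since $x,x_1,\dots,x_n$ are i.i.d.\ from $D$ and $G(x)$ is a symmetric function of them, the vector of squared coordinates of $v_2(x)$ is exchangeable (assuming $\lambda_2(x)$ is simple; otherwise I would work with the diagonal of the spectral projector onto the $\lambda_2$-eigenspace, which is sign-independent anyway) and sums to $1$, so $\E_{X,x}[v_2(x)_0^2]=\tfrac1{n+1}$, i.e.\ $\E_{X,x}[\mu]=\tfrac1n$. By Markov over $X$ again, $\E_x[\mu\mid X]\le \tfrac{40}{n}$ with probability $\ge 0.975$. For the unnormalized Laplacian $\lambda_2(X)\le 2\,d_{\max}(G(X))\le 2(n-1)$, so on the event (probability $\ge 0.95$ by a union bound over $X$) that both $\tfrac1g\le 40c$ and $\E_x[\mu\mid X]\le\tfrac{40}{n}$ hold, one gets $\tfrac{\lambda_2(X)}{g}\le 80nc$ and therefore $\E_x[\bar E(X,x)\mid X]\le 40c+(80nc+1)\tfrac{40}{n}=O(c)+O(1/n)$. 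A final Markov step over $x$ gives $\Pr_x[E(X,x)\ge\tfrac12\mid X]\le \Pr_x[\bar E\ge\tfrac12\mid X]\le 2\,\E_x[\bar E\mid X]\le 0.05$ once $c$ is a small enough absolute constant and $n$ is large enough, which combined with Lemma~\ref{lem:innerprod_train} finishes the proof.

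The step I expect to be the crux is bounding $\bigl(\tfrac{\lambda_2(X)}{g}+1\bigr)\mu$: in the worst case $v_2(x)_0^2$ can be as large as $1$ and $\lambda_2(X)$ as large as $\Theta(n)$, so no pointwise bound suffices. The resolution is that \emph{in expectation over a fresh sample $x$} one has $v_2(x)_0^2=\Theta(1/n)$, which exactly offsets the $\Theta(n)$ worst case of $\lambda_2(X)$; everything else is a routine combination of the variational characterization of eigenvalues and Markov's inequality. A minor point to check along the way is the simplicity of $\lambda_2(x)$, so that $v_2(x)$ and the sign-selection rule are well defined; this holds generically and can otherwise be handled by passing to the spectral projector.
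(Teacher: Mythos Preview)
Your proof is correct and follows essentially the same approach as the paper: both start from Lemma~\ref{lem:innerprod_train}, use the test-vector/interlacing fact $\lambda_2(x)\le\lambda_2(X)+1$, the crude bound $\lambda_2(X)=O(n)$, the exchangeability observation $\E_{X,x}[v_2(x)_0^2]=1/(n+1)$, and Markov on the eigengap hypothesis. The only (cosmetic) difference is that the paper applies Markov to $v_2(x)_0^2$ over $(X,x)$ and then conditions to get a pointwise bound, whereas you first bound $\E_x[\mu\mid X]$ and then apply Markov over $x$ to $\bar E$; these two routes are equivalent in spirit and yield the same conclusion for small enough $c$ and large enough $n$.
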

\begin{proof}
From lemma \ref{lem:innerprod_train}, we know
\begin{equation}\label{eq:f_train_dist1}
\ev{ v_2^*(x), v_2(X) } \geq \sqrt{1 - \frac{\lambda_2(x) - \lambda_2(X)\qty(1 - v_2(x)_0^2\frac{n+1}{n})}{\lambda_3(X) - \lambda_2(X)} - v_2(x)_0^2 \frac{n+1}{n}}
\end{equation}

For $n$ large enough, $\frac{n+1}{n} \approx 1$, so we need to show 
\begin{align}
\frac{\lambda_2(x) - \lambda_2(X)\qty(1 - v_2(x)_0^2)}{\lambda_3(X) - \lambda_2(X)} + v_2(x)_0^2 \leq \frac{1}{2}
\end{align}
with probability 0.95 over the choice of $X$, and 0.95 over $x$.

By Markov's inequality, we get 
\begin{align}
Pr_{X \sim D}[\lambda_3(X) - \lambda_2(X) \leq \frac{1}{100c}] \leq 0.01.
\end{align}
For any size $n$ unit vector, if we pick one of its coordinates uniformly at random, it's expected squared value is $\frac{1}{n}$. By this argument, we get 
\begin{align}
\E_{X \sim D, x \sim D}[v_2(x)_0^2] = \frac{1}{n+1}.
\end{align}
By Markov's inequality, we get that 
\begin{align}
Pr_{X \sim D, x \sim D}[v_2(x)_0^2 \geq \frac{1000}{n+1}]  \leq 0.001.
\end{align}
This implies that with probability $0.98$ over the choice of $X$,
\begin{align}
Pr_{x \sim D}[v_2(x)_0^2 \geq \frac{1000}{n+1}] \leq 0.05.
\end{align}
Also, note that $\lambda_2(x) \leq \lambda_2(X) + 1$, since the second eigenvalue of a graph can go up by at most one by adding a new vertex. 
We also have $\lambda_2(X) \leq n$ since the eigenvalues of an un-normalized Laplacian are bounded by the number of vertices.
Putting all this together, and applying union bound, we get that with probability 0.97 over $X$, and $0.95$ over x,
\begin{align}
\frac{\lambda_2(x) - \lambda_2(X)\qty(1 - v_2(x)_0^2)}{\lambda_3(X) - \lambda_2(X)} + v_2(x)_0^2 \leq \frac{1+n(\frac{1000}{n+1})}{\frac{1}{100c}} + \frac{1000}{n}
\end{align}
which is less than $\frac{1}{2}$ for small enough constant $c$, and $n$ large enough.
\end{proof}
Combining Lemmas \ref{lem:f_rob} and \ref{lem:inner_product_big}, we get that under the conditions stated in the theorem, with probability at least 0.95 over the choice of $X$
\begin{gather}\label{eq:f_rob}
\Pr_{x \sim D}\qty[\exists x' \text{ s.t. } dist(x,x')\le\eps \text{ and } \abs{f_X(x) - f_X(x')} \ge \delta_x] \leq 0.05 
\\
\text{for } \delta_x = \qty(6\sqrt{2})\sqrt{ \frac{\lambda_2^+(x) - \lambda_2^-(x)}{\lambda_3^-(x) - \lambda_2^-(x)}}.
\end{gather}
which finishes the proof.
By applying Markov inequality and a union bound, we also get that $f_X$ as defined above is $(\epsilon, 20 \E_{x \sim D}[\delta_x], 0.1)$ robust with probability 0.95 over the choice of $X$.
\section*{Proof of Theorem \ref{thm:lower}}
\thmlower*
\begin{proof}
Consider the graph $G_{2\eps/3}$ obtained by setting the threshold $T=2\eps/3$ on the dataset $X$. 
Let $G_{\eps}$ be the graph obtained by setting the threshold $T=\eps$ on the dataset $X$, and let $G_{\eps/3}$ be the graph obtained by setting the threshold $T=\eps/3$ on the dataset $X$. Note that if all datapoints in $X$ are perturbed by at most $\eps/6$ to get $X'$, then the inter point distances after perturbation are within $\eps/3$ of the original inter point distances. Hence by Theorem \ref{thm:main}, 
	$$
\min(\norm{F(X) - F(X')}, \norm{(-F(X)) - (F(X'))})\le 2\sqrt{2}\sqrt{\frac{\lambda_2(G_{\eps})-\lambda_2(G_{\eps/3})}{\lambda_3(G_{\eps/3})-\lambda_2(G_{\eps/3})}}.
	$$
As $\lambda_2(G_{\eps})-\lambda_2(G_{\eps/3})\le \lambda_2(G_{\eps})$, we will upper bound $\lambda_2(G_{\eps})$ to bound $\delta'$. Let $v$ be the vector such that the $i$th entry $v_i$ is $F^*_{X}(x_i)$, the feature assigned by $F^*$ to the datapoint $x_i$. Note that $\lambda_2(G_{\eps})\le \sum_{(i,j)\in G_{\eps}}^{}(v_i-v_j)^2$, by using $v$ as a candidate eigenvector. We claim that $\lambda_2(G_{\eps}) \le (d_{\eps}+1)\delta^2$. To show this, we partition all edges in $G_{\eps}$ into $t$ matchings $\{M_k,k\in [t]\}$. Note that for any matching $M_k$ , $\sum_{(i,j)\in M_k}^{}(v_i-v_j)^2\le \delta^2$. This follows by constructing the adversarial dataset $X'$ where each datapoint has been replaced by its matched vertex (if any) in the matching $M_k$, and by the fact that $F^*$ is $(\eps,\delta)$ robust. By Vizing's Theorem, the number of matchings $t$ required is at most $d_{\eps}+1$. Therefore $\lambda_2(G_{\eps}) \le (d_{\eps}+1)\delta^2$ and the theorem follows.
\end{proof}
\section*{Proof of Theorem \ref{thm:advspheres}}
\thmadvspheres*
We will explain the construction of the features $f$ in detail at the end of the proof, as they become relevant. We give the proof of this theorem as a series of lemmas. 

We use $S^{d-1}$ denote the unit sphere in $\R^d$ centered on the origin, and $rS^{d-1}$ denotes the sphere of radius $r$ centered on the origin. $\norm{\cdot}$ denotes the $\ell_2$ norm.
\begin{lemma}\label{lem:distance}
Let $A$ be any point on $r_1S^{d-1}$ and  $B$ be a point chosen uniformly at random from  $r_2S^{d-1}$. Then the median distance $\norm{A-B}$ is $M = \sqrt{r_1^2 + r_2^2}$. Further, for any fixed $\eps > 0$, we have 
\begin{align}
\Pr[\abs{\norm{x-y}^2 - \qty(r_1^2 + r_2^2)} > \eps] \le \exp(-\Omega(d))+ \exp(-\Omega\qty(\frac{\eps^2 d}{r_1r_2})).
\end{align}
\end{lemma}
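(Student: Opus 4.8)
The plan is to parametrize the randomness cleanly and then apply standard concentration. By rotational symmetry I may fix $A = r_1 e_1$ (the first standard basis vector scaled by $r_1$), and write $B = r_2 U$ where $U$ is uniform on $S^{d-1}$. Then
\[
\norm{A-B}^2 = r_1^2 + r_2^2 - 2 r_1 r_2 \langle e_1, U\rangle = r_1^2 + r_2^2 - 2 r_1 r_2 U_1,
\]
so the whole problem reduces to understanding the single coordinate $U_1$ of a uniform point on the sphere. The key facts are: (i) $U_1$ is symmetric about $0$, hence its median is $0$, which immediately gives that the median of $\norm{A-B}^2$ is $r_1^2 + r_2^2$ and (since $t \mapsto \sqrt t$ is monotone) the median of $\norm{A-B}$ is $M = \sqrt{r_1^2+r_2^2}$; and (ii) $U_1$ concentrates around $0$ at scale $1/\sqrt d$.

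For the tail bound, I would use the standard sub-Gaussian-type estimate for a coordinate of a uniform point on $S^{d-1}$: there is an absolute constant such that $\Pr[|U_1| > t] \le 2\exp(-\Omega(d t^2))$ for all $t \in [0,1]$, and trivially the probability is $0$ for $t > 1$. (This follows e.g. from the representation $U = g/\norm{g}$ with $g \sim \mathcal N(0, I_d)$, controlling $\norm{g}^2$ around $d$ by a $\chi^2$ concentration bound — this is the $\exp(-\Omega(d))$ term — and then bounding $|g_1|$.) Plugging $t = \eps/(2 r_1 r_2)$ we get
\[
\Pr\big[\,\big|\norm{A-B}^2 - (r_1^2+r_2^2)\big| > \eps\,\big] = \Pr\!\left[|U_1| > \frac{\eps}{2 r_1 r_2}\right] \le \exp(-\Omega(d)) + \exp\!\left(-\Omega\!\left(\frac{\eps^2 d}{r_1^2 r_2^2}\right)\right),
\]
where the first term absorbs the $\chi^2$ fluctuation of $\norm{g}^2$ and handles the regime where $\eps/(2r_1r_2)$ is not small, and the second is the Gaussian tail of $|g_1|$. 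This matches the stated bound (the paper writes $r_1 r_2$ in the denominator of the exponent; since $r_1, r_2$ are treated as constants the two forms are equivalent up to the hidden constant, so I would not worry about that discrepancy).

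I do not expect any real obstacle here — the only thing to be slightly careful about is making the two error terms line up exactly with the claimed form, i.e. tracking which part of the $g/\norm{g}$ decomposition produces the dimension-only term $\exp(-\Omega(d))$ versus the $\eps$-dependent term. An alternative, if one wants to avoid the Gaussian representation, is to compute the density of $U_1$ directly (it is proportional to $(1-u^2)^{(d-3)/2}$) and integrate the tail, which again gives the same $\exp(-\Omega(d t^2))$ bound; either route is routine.
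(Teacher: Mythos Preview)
Your proposal is correct and follows essentially the same route as the paper: fix $A$ by rotational symmetry, write $B = r_2 U$ with $U = g/\norm{g}$ for $g \sim \mathcal N(0,I_d)$, get the median from the symmetry of $U_1$, and split the tail into a $\chi^2$ concentration term for $\norm{g}$ (the $\exp(-\Omega(d))$ part) plus a Gaussian tail on $g_1$ (the $\eps$-dependent part). Your observation about $r_1 r_2$ versus $r_1^2 r_2^2$ in the exponent is well taken---the paper's own computation in fact produces $r_1^2 r_2^2$ before the final line, so this appears to be a typo there, immaterial since $r_1,r_2$ are constants in the application.
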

\begin{proof}
For notational simplicity let $A = r_1 x$ and $B = r_2 y$ for $x, y \in S^{d-1}$. Assume WLOG that $x = (1, 0, \dots, 0)$. Suppose $y$ is drawn by drawing $z \sim N(0, I_d)$ and computing $y = z / \norm{z}$. Then 
\begin{align}
\norm{A-B}^2 = \norm{r_1x - r_2y}^2 = r_1^2 + r_2^2 - 2r_1r_2 y_1.
\end{align}
This immediately gives that the median value of $\norm{A-B}$ is $M = \sqrt{r_1^2 + r_2^2}$. Further,
\begin{align}
\abs{\norm{x-y}^2 - M^2} = 2r_1r_2 \abs{y_1} = 2r_1r_2\frac{\abs{z_1}}{\norm{z}}
\end{align}
Let $z = (z_1, z_2, \dots, z_d)$, and consider $z' = (z_2, \dots, z_d) \in \R^{d-1}$. Then $\norm{z'}^2$, by definition, is a chi-square distributed random variable with $(d-1)$ degrees of freedom that is independent of $z_1$. The following Chernoff bounds for chi-square variables applies:
\begin{align*}
\Pr[\norm{z'}^2 <{\frac{d-1}{2}}] &\le \qty(\frac{e}{4})^{(d-1)/4} = e^{-\Omega(d)}
\\
\Pr[\norm{z'}^2 > 2(d-1)] &\le \qty(\frac{e}{4})^{(d-1)/4} = e^{-\Omega(d)}
\end{align*}
Thus, for any fixed $\eps,r_1, r_2$, we have
\begin{align*}
\Pr[\abs{\norm{x-y}^2 - M^2} > \eps]
&= \Pr[2r_1r_2\frac{\abs{z_1}}{\norm{z}} > \eps]
\\&\le \Pr[\abs{z_1} > \frac{\eps \norm{z'}}{2r_1r_2}]
\\&\le \Pr[\norm{z'}^2 < \frac{d-1}2] + \Pr[\abs{z_1} >\frac{\eps \norm{z'}}{2r_1r_2}\Bigm\vert \norm{z'}^2 \ge \frac{d-1}{2}]
\\&\le e^{-\Omega(d)} + \Pr[z_1^2 > \frac{\eps^2 (d-1)}{8r_1r_2}]
\\&\le e^{-\Omega(d)}+ \exp(-\Omega\qty(\frac{\eps^2 d}{r_1r_2})) 
\end{align*}
using independence of $z_1$ and $z'$, and Gaussian tail bounds on $z_1$.
\end{proof}

Now let $x_1, \dots, x_{2N}$ be a training set, where $x_1, \dots, x_N$ are sampled i.i.d. uniform from $S^{d-1}$ and $x_{N+1}, \dots, x_{2N}$ are sampled i.i.d. uniform from $RS^{d-1}$.
\begin{lemma}
With probability at least $1 - O(N^2)e^{-\Omega(d)}$, both of these things hold:
\begin{enumerate}
\item For every pair $(x_i, x_j)$ of points on the inner sphere, we have $\norm{x_i - x_j} \le \sqrt{2}+\eps$
\item For every pair $(x_i, x_j)$ of points at least one of which is on the outer sphere, we have $\norm{x_i - x_j} > \sqrt{2} + 3\eps$
\end{enumerate} 
\end{lemma}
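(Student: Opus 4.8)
The plan is to apply Lemma~\ref{lem:distance} to each pair of training points and then take a union bound. Recall that $\eps = (R-1)/8$, so $R = 1 + 8\eps$, and the three relevant radius pairs $(r_1,r_2)\in\{(1,1),(1,R),(R,R)\}$ are all fixed constants. Consequently, whenever Lemma~\ref{lem:distance} is invoked with a fixed constant deviation $\eps'$, its bound $\exp(-\Omega(d)) + \exp(-\Omega(\eps'^2 d/(r_1r_2)))$ simplifies to $\exp(-\Omega(d))$, with the hidden constant depending only on $R$.

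First I would handle a pair $(x_i,x_j)$ with both points on the inner sphere $S^{d-1}$. Conditioning on $x_i$ (the tail bound in Lemma~\ref{lem:distance} is uniform over the fixed point, so it survives averaging over $x_i$) and applying the lemma with $r_1=r_2=1$, the median of $\norm{x_i-x_j}^2$ is $2$, and $\bigl|\norm{x_i-x_j}^2 - 2\bigr| > 2\sqrt{2}\,\eps + \eps^2$ with probability at most $\exp(-\Omega(d))$. Off this bad event, $\norm{x_i-x_j}^2 \le 2 + 2\sqrt{2}\,\eps + \eps^2 = (\sqrt{2}+\eps)^2$, which is exactly claim~(1) for this pair.

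Next I would handle a pair $(x_i,x_j)$ with at least one point on the outer sphere $RS^{d-1}$, so $(r_1,r_2)\in\{(1,R),(R,R)\}$ and the median of $\norm{x_i-x_j}^2$ equals $r_1^2+r_2^2 \ge 1 + R^2 = 2 + 16\eps + 64\eps^2$. Put $c = 7\eps > 0$. Since $16\eps - 7\eps = 9\eps > 6\sqrt{2}\,\eps$ and $64\eps^2 > 9\eps^2$, we get $r_1^2+r_2^2 - c > 2 + 6\sqrt{2}\,\eps + 9\eps^2 = (\sqrt{2}+3\eps)^2$. Conditioning on one endpoint and applying Lemma~\ref{lem:distance} with deviation $\eps' = c$, the squared distance falls below $r_1^2+r_2^2 - c$ with probability at most $\exp(-\Omega(d))$; off this event $\norm{x_i-x_j}^2 > (\sqrt{2}+3\eps)^2$, which is claim~(2) for this pair.

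Finally I would union bound over all $\binom{2N}{2} = O(N^2)$ pairs of training points, each contributing a failure probability of $\exp(-\Omega(d))$, to conclude that both claims hold simultaneously with probability at least $1 - O(N^2)\,e^{-\Omega(d)}$. I do not expect a real obstacle here: the only point that needs care, rather than cleverness, is that Lemma~\ref{lem:distance} is stated with one fixed point and one uniformly random point while here both endpoints are random — this is harmless because the tail estimate is uniform in the fixed point, so one conditions on either endpoint before invoking it. Everything else reduces to checking that the fixed constants $2\sqrt{2}\,\eps+\eps^2$ (inner side) and $7\eps$ (outer side) separate the median squared distances from the thresholds $(\sqrt 2+\eps)^2$ and $(\sqrt2+3\eps)^2$, which is immediate from $R = 1+8\eps$.
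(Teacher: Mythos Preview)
Your proposal is correct and follows essentially the same approach as the paper: apply Lemma~\ref{lem:distance} to each pair of training points and then union bound over the $O(N^2)$ pairs. The paper's proof is just a terser version of yours --- it only records the one inequality $\sqrt{2}+3\eps < \sqrt{1+R^2}-\eps$ needed for the outer-sphere case, whereas you spell out the squared-distance arithmetic for both cases and make explicit the conditioning argument for two random endpoints.
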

\begin{proof}
By Lemma \ref{lem:distance}, for any constant $R > 1$ and $\eps = (R-1)/8$, the probability that any given pair of points $x_i, x_j$ for $i < j \le 2N$ satisfies our condition is $1 - e^{-\Omega(d)}$ (since $\sqrt{2} + 3\eps < \sqrt{1+R^2} - \eps$ for $\eps = (R-1)/8$). Thus, the probability that all pairs satisfy our condition is $1 - O(N^2) e^{-\Omega(d)}$ as desired. 
\end{proof}
Thus with probability $1 - O(N^2)e^{-\Omega(d)}$, the graph that we construct at distance threshold $T = \sqrt{2} + 2\eps$ (with only the training points and no test point) has a very particular structure: one large connected component consisting of all the points on the inner sphere, and $N$ isolated points, one for each point on the outer sphere.

\begin{lemma}\label{lem:inner}
Let $x$ be a randomly drawn test point on the inner sphere. Then with probability at least $1 - O(N) e^{-\Omega(d)}$ over the choice of $x$, there is no $x'$ such that $\norm{x - x'} \le \eps$, and either (1) $\norm{x_i - x'} > \sqrt{2} + 2\eps$ for some $i \le N$, or (2) $\norm{x_i - x'} \le \sqrt{2} + 2\eps$ for some $i > N$.
\end{lemma}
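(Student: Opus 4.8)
The plan is to reduce the ``for all perturbations $x'$'' statement to a concentration statement about the random test point $x$ alone, via the triangle inequality, and then union-bound over the $2N$ training points using Lemma~\ref{lem:distance}. Condition on the training set, so that each $x_i$ is a fixed point (on $S^{d-1}$ for $i\le N$, on $RS^{d-1}$ for $i>N$) and $x$ is uniform on $S^{d-1}$; since the per-pair bound of Lemma~\ref{lem:distance} holds for an arbitrary fixed point and does not depend on which fixed point, the resulting estimates will be uniform over training sets.

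First I would apply Lemma~\ref{lem:distance} to each training point with a concentration radius $\eps_0$ chosen to be a sufficiently small constant depending only on $R$ (equivalently on $\eps = (R-1)/8$). For an inner training point $x_i$ ($i\le N$), $\norm{x-x_i}^2$ concentrates around $2$, and choosing $\eps_0 < 2\sqrt2\,\eps$ gives $\norm{x-x_i}\le \sqrt2+\eps$ with probability $1-e^{-\Omega(d)}$. For an outer training point $x_i$ ($i>N$), $\norm{x-x_i}^2$ concentrates around $1+R^2$; since $\sqrt2+4\eps<\sqrt{1+R^2}$ when $\eps=(R-1)/8$ (the inequality already used in the preceding lemma), there is a positive constant gap between $(\sqrt2+3\eps)^2$ and $1+R^2$, so choosing $\eps_0$ below that gap gives $\norm{x-x_i}>\sqrt2+3\eps$ with probability $1-e^{-\Omega(d)}$. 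A union bound over all $2N$ training points then shows that, with probability $1-O(N)e^{-\Omega(d)}$ over the choice of $x$, simultaneously $\norm{x-x_i}\le\sqrt2+\eps$ for every $i\le N$ and $\norm{x-x_i}>\sqrt2+3\eps$ for every $i>N$.

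Conditioned on this event, the triangle inequality finishes the argument. For any $x'$ with $\norm{x-x'}\le\eps$ and any $i\le N$ we get $\norm{x_i-x'}\le\norm{x_i-x}+\norm{x-x'}\le(\sqrt2+\eps)+\eps=\sqrt2+2\eps$, which rules out case (1); and for any such $x'$ and any $i>N$ we get $\norm{x_i-x'}\ge\norm{x_i-x}-\norm{x-x'}>(\sqrt2+3\eps)-\eps=\sqrt2+2\eps$, which rules out case (2). Hence no bad $x'$ exists on the good event, and the failure probability is $O(N)e^{-\Omega(d)}$ as claimed.

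I do not anticipate a genuine obstacle: the only care needed is to pick $\eps_0$ so that both margins — from $2$ up to $(\sqrt2+\eps)^2$, and from $(\sqrt2+3\eps)^2$ up to $1+R^2$ — are strictly positive, which is immediate from $\eps=(R-1)/8$; the rest is the triangle-inequality reduction and a routine union bound, with the $e^{-\Omega(d)}$ rates inherited directly from Lemma~\ref{lem:distance}.
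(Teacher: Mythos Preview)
Your proposal is correct and follows essentially the same approach as the paper: reduce via the triangle inequality to the deterministic conditions $\norm{x-x_i}\le\sqrt{2}+\eps$ for $i\le N$ and $\norm{x-x_i}>\sqrt{2}+3\eps$ for $i>N$, then union-bound over the $2N$ training points using Lemma~\ref{lem:distance}. The paper's proof states this reduction in two sentences; you have simply spelled out the margin computations and the triangle-inequality step in more detail.
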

\begin{proof}
Certainly no such $i$ exists if $\norm{x_i - x} \le \sqrt{2} + \eps$ for all $i \le N$ and $\norm{x_i - x} > \sqrt{2} + 3\eps$ for all $i > N$. Using union bound and lemma \ref{lem:distance}, we get that a randomly drawn $x$ satisfies this with probability at least $1 - O(N) e^{-\Omega(d)}$.
\end{proof}
\begin{lemma}\label{lem:outer}
Let $x$ be a randomly drawn test point on the outer sphere. Then with probability at least $1 - O(N)e^{-\Omega(d)}$ over the choice of $x$, there is no $x'$ such that $\norm{x - x'} \le \eps$ and $\norm{x_i - x'} \le \sqrt{2} + 2\eps$ for any $i \le 2N$.
\end{lemma}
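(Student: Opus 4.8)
The plan is to reduce the statement to a claim purely about the distances $\norm{x_i - x}$ between the random test point $x$ and the training points, and then invoke the concentration estimate of Lemma~\ref{lem:distance}, exactly mirroring the proof of Lemma~\ref{lem:inner}. First I would note that if some $x'$ with $\norm{x - x'} \le \eps$ and $\norm{x_i - x'} \le \sqrt{2} + 2\eps$ existed for some $i \le 2N$, then the triangle inequality would force $\norm{x_i - x} \le \norm{x_i - x'} + \norm{x' - x} \le \sqrt{2} + 3\eps$. Hence it suffices to show that, with probability at least $1 - O(N)e^{-\Omega(d)}$ over the draw of $x$ on the outer sphere $RS^{d-1}$, one has $\norm{x_i - x} > \sqrt{2} + 3\eps$ simultaneously for all $i \le 2N$ (this bound holds for any fixed training set, so the conditioning on the $x_i$'s is harmless).

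Next I would identify the two relevant medians. By Lemma~\ref{lem:distance}, conditioning on the fixed training point $x_i$ and taking $x$ uniform on $RS^{d-1}$: if $x_i$ lies on the inner sphere then $\norm{x_i - x}^2$ concentrates around $1 + R^2$, while if $x_i$ lies on the outer sphere it concentrates around $2R^2$. I would then check the two elementary inequalities $\sqrt{1 + R^2} > \sqrt{2} + 4\eps$ and $R\sqrt{2} > \sqrt{2} + 4\eps$ for $\eps = (R-1)/8$ and $R > 1$: writing $4\eps = (R-1)/2$, the first follows because $\sqrt{1+R^2}$ and $\sqrt{2} + (R-1)/2$ agree at $R = 1$ while $R/\sqrt{1+R^2} \ge 1/\sqrt{2} > 1/2$ is the larger derivative for all $R \ge 1$, and the second reduces to $(R-1)\sqrt{2} > (R-1)/2$, i.e. $\sqrt{2} > 1/2$. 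Consequently both medians exceed $\sqrt{2} + 4\eps$, so there is a constant $c = c(R) > 0$ with $M^2 - (\sqrt{2} + 3\eps)^2 \ge c$ for either value of $M \in \{\sqrt{1+R^2},\ R\sqrt{2}\}$.

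Finally I would apply Lemma~\ref{lem:distance} with deviation parameter $c/2$, which is a constant since $R$ is: for each fixed $i \le 2N$, $\Pr[\norm{x_i - x}^2 \le (\sqrt{2} + 3\eps)^2] \le \Pr[\,\abs{\norm{x_i - x}^2 - M^2} > c/2\,] \le e^{-\Omega(d)}$, and a union bound over the $2N$ training points gives that, outside an event of probability $O(N)e^{-\Omega(d)}$, every training point is at distance strictly greater than $\sqrt{2} + 3\eps$ from $x$. By the reduction in the first step this rules out any bad $x'$, which proves the lemma. The only real care needed is in the constant-chasing: one must ensure the chosen deviation threshold is simultaneously below $M^2 - (\sqrt{2}+3\eps)^2$ for both admissible values of $M$ and is independent of $d$, so that the exponent $\Omega(\eps^2 d/(r_1 r_2))$ appearing in Lemma~\ref{lem:distance} is genuinely $\Omega(d)$; beyond that the argument is a routine triangle-inequality-plus-union-bound mirroring Lemma~\ref{lem:inner}.
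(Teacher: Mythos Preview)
Your proposal is correct and follows essentially the same approach as the paper's proof: reduce via the triangle inequality to $\norm{x_i - x} > \sqrt{2} + 3\eps$ for all $i$, then apply Lemma~\ref{lem:distance} and a union bound. You fill in more detail than the paper does (explicitly verifying the two median inequalities and tracking the constant deviation threshold), but the structure is identical.
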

\begin{proof}
Similar to the previous lemma, no such $i$ exists if $\norm{x_i - x} > \sqrt{2} + 3\eps$ for all $i \le 2N$. Using union bound and lemma \ref{lem:distance}, we get that a randomly drawn $x$ satisfies this with probability at least $1 - O(N) e^{-\Omega(d)}$.
\end{proof}

Lemmas \ref{lem:inner} and \ref{lem:outer} together give us the result that for almost all points, even after adversarial perturbation, the graph $G(x)$ we construct with threshold $T = \sqrt{2} + 2\eps$ is identical for all points $x$ on the inner sphere, and identical for all points $x$ on the outer sphere (except a $\gamma = O(N)e^{-\Omega(d)}$ fraction): points on the inner sphere get connected to points on the inner sphere, and points on the outer sphere get connected to nothing. Since the map from graphs $G(x)$ to features $f(x)$ is deterministic, this means that $f$, in fact, maps all inner sphere points to one point and all outer sphere points to another point (except a $\gamma$-fraction); that is, $f$ is $(\eps, 0, Ne^{-\Omega(d)})$-robust. 

It only remains to show that $f$ maps inner-sphere and outer-sphere points to different outputs. Before proceeding further, we now fully explain the construction of the features $f$  (In our arguments below, we condition on the fact that all the inner sphere points are connected to each other and all the outer sphere points are connected to nothing). Given a training set of $N$ points from the inner sphere and $N$ points from the outer sphere, construct the graph $G(X)$ and take the bottom-$(N+1)$ eigenvectors $v_1, v_2, \dots, v_{N+1}$ (corresponding to the zero-eigenspace) and prepend a 0 to each of them to yield $v_1',v_2', \dots, v_{N+1}' \in \R^{2N+1}$. To compute the feature $f(x)$, we first construct the graph $G(x)$. Then, we project the vectors $v_1', v_2', \dots, v_{N+1}'$ onto the zero-eigenspace of $G(x)$ to yield vectors $u_1, u_2, \dots, u_{N+1}$\footnote{The previous version of the paper had an error here where the projection was done onto the bottom-$(N+1)$ eigenspace of $G(x)$. When $x$ lies on the outer sphere, the first $(N+2)$ eigenvalues are zero, and thus the bottom-$(N+1)$ eigenspace is not unique. We correct it here so that the projection is done onto the zero-eigenspace of $G(x)$ which is unique.}. The feature assigned to $G(x)$ is $f(x) := (u_{1,0},u_{2,0}, \dots, u_{N+1, 0}) = e_0^T U \in \R^{N+1}$, where $U$ is the matrix whose columns are the $u_i$, and $e_0 = (1, 0, 0, \dots, 0) \in \R^{2N+1}$ . Similarly, define the matrix $V'$ corresponding to vectors $v_i'$. Let $P$ be the projection matrix onto the zero-eigenspace of $G(x)$

 Assume WLOG that the first $N$ training examples are on the inner sphere, and the other $N$ are on the outer sphere. Then the vector $$v^* := (0, \underbrace{1, 1, \dots, 1}_N, \underbrace{0, 0, \dots}_N, 0) \in \R^{2N+1}$$ is in the span of the $v_i'$, since it consists of a 0 prepended to a vector in the zero-eigenspace of $G(X)$. Suppose $v^* = \sum_i {\alpha_i v_i'} = V' \alpha$. 

Notice that $e_0^T P V' \alpha = e_0^T P v^*$ will be 0 when $x$ is on the outer sphere, since $v^*$ is itself already in the zero-eigenspace of G(x) in this case. When $x$ is on the inner sphere, projecting $v^*$ onto the zero-eigenspace of $G(x)$ will make its first component positive, since $v^*$ has positive dot product with the vector $u^* = (1, 1, 1, \dots, 0, 0, \dots )$ (which is a zero-eigenvector of $G(x)$), and is orthogonal to every other zero-eigenvector of $G(x)$ orthogonal to $u^*$---and thus $e_0^T PV'\alpha > 0$ for $x$ on the inner sphere. Thus, $e_0^T U = e_0^T P V'$ must take different values on the outer and inner spheres. This concludes the proof of Theorem \ref{thm:advspheres}. 



\end{document}